\title{Calculus on MDPs: Potential Shaping as a Gradient}
\author {
    % Authors
    Erik Jenner,\textsuperscript{\rm 1, 2}
    Herke van Hoof,\textsuperscript{\rm 1}
    Adam Gleave\textsuperscript{\rm 2}
}
\begin{document}
\maketitle

\begin{abstract}
    In reinforcement learning, different reward functions can be equivalent in terms
    of the optimal policies they induce. A particularly well-known and important
    example is potential shaping, a class of functions that can be added to any reward function
    without changing the optimal policy set under arbitrary transition dynamics.
    Potential shaping is conceptually similar to 
    potentials, conservative vector fields and gauge transformations 
    in math and physics, but this connection has not previously been formally explored. We develop a formalism for discrete calculus on
    graphs that abstract a Markov Decision Process, and show how potential shaping
    can be formally interpreted as a gradient within this framework. This allows
    us to strengthen results from \citet{ng1999} describing conditions under which
    potential shaping is the only additive reward transformation to always preserve
    optimal policies. As an additional application of our formalism, 
    we define a rule for picking a single unique
    reward function from each potential shaping equivalence class.
\end{abstract}
\section{Introduction}
For simple tasks, it may be possible to hand-design a good reward function as part of the problem specification.
But in practice, specifying a good reward function is often a significant part of the
challenge~\citep{ng:2000,akrour:2011,christiano:2017,brown:2019,ziegler:2019}.
Reward functions have thus become an important object of study
in their own right~\citep{russell19explaining,michaud2020understanding,gleave2021,jenner2021preprocessing,
icarte2022,skalse2022,wulfe2022}.

One important feature of reward functions is that different reward functions can be \emph{equivalent}
in a strong sense, by inducing identical optimal policies under any transition dynamics.
A trivial example is that scaling any reward function by a positive constant leaves the optimal
policy set unchanged. More interestingly, \citet{ng1999} introduced \emph{potential shaping},
a type of additive reward shaping that is guaranteed to not affect optimal policies.
The name \enquote{\emph{potential} shaping} suggests a connection to potentials and gradient
fields in physics/vector calculus, but this connection
has not previously been formalized and studied.

The central idea of this paper is to construct a version of calculus in which potential
shaping is the gradient of a potential function. Specifically, we generalize
the theory of \emph{discrete calculus on graphs} by introducing a discount factor $\gamma$.
Many concepts from vector calculus carry over to this new framework, allowing us to
describe existing results in the language of calculus, and to derive several new results
related to potential shaping.
Readers do not need prior knowledge about vector calculus or physics to follow this paper.

We have organized our work into two main sections.
First, \cref{sec:calculus} defines the gradient
and curl operators as well as the discounted line integral and gives theoretical results related
to these definitions.
As the main application of these ideas, we generalize a result from the
original potential shaping paper: \citet{ng1999} showed that potential shaping is the only additive shaping
that leaves the optimal policy invariant under arbitrary transition dynamics. We prove that potential
shaping is \emph{still} the only additive shaping term if policy-invariance is required for only a much
smaller set of transition dynamics. Additionally, we use our calculus framework to rederive extensions
of potential shaping to time-dependent~\citep{devlin2012} and action-dependent~\citep{wiewiora2003} potentials.
These follow naturally from an analogy between reinforcement learning and classical gauge theory.
The second main part of the paper, \cref{sec:gauge_fixing} introduces the
divergence operator and shows how rewards with zero divergence can be used
as canonical representatives of each potential shaping equivalence class.
Such canonicalization can for example be used to define
potential shaping-invariant distance measures between reward functions~\citep{gleave2021}.

\section{Related work}
\paragraph{Potential shaping}
By far the most important prior work for this paper is the description of
potential shaping (often also called potential-based reward shaping) by \citet{ng1999}.
They showed that it is the only additive reward transformation that preserves the set
of optimal policies under arbitrary transition dynamics. We extend this result by showing that
potential shaping is still the only additive reward transformation
even if we gain certain types of knowledge about the transition dynamics.

Potential shaping has been generalized to potentials that depend on state-action
pairs~\citep{wiewiora2003} and the current time step~\citep{devlin2012},
as well as both of these at once~\citep{harutyunyan2015}.
Our work recovers these results as natural consequences
of an analogy between potential shaping and gauge transformations.
There are also successful applications of potential shaping to other settings,
such as multi-agent systems~\citep{babes2008,devlin2011}, hierarchical RL~\citep{gao2015},
and multi-objective RL~\citep{mannion2017}.

Recent work has characterized a wider range of reward transformations.
These also do not change optimal policies~\citep{cao2021,kim2021,skalse2022} and various other
objects~\citep{skalse2022,schubert2021} for a subset of possible transition dynamics.
However, we focus exclusively on potential shaping in this work due to its broad applicability.

\paragraph{Discrete calculus on graphs}
Our interpretation of potential shaping through a calculus lens generalizes the theory
of \emph{discrete calculus on graphs}~\citep{grady2010,lim2020}.
Specifically, we interpret the set of possible transitions in an MDP as a directed graph
and then define a generalization of calculus for this graph by taking into account the discount
factor of the MDP.
We do not directly make use of results proven for the undiscounted case,
but most of our definitions and theorem statements are directly inspired by analogous versions
for the undiscounted case.

Graph Laplacians have been applied in reinforcement learning before, e.g.\ using their
eigenvectors as a basis for value function approximation~\citep{mahadevan2007,petrik2007}.
Like our work, this means assigning an underlying graph structure to an MDP, but otherwise
the approaches are quite different. For example, previous work used \emph{undiscounted} Laplacians,
and did not connect these to potential shaping or to discrete calculus more broadly.

\section{Preliminaries}\label{sec:preliminaries}
\subsection{Markov Decision Processes}
A \emph{Markov Decision Process (MDP)} is defined as a tuple $\MDP$ where
$\states$ is a set of states, $\actions$ a set of actions, $P$ a transition distribution
with probability $P(s'|s,a)$ of transitioning to state $s'$ from state $s$ when action $a$ is taken,
$\mu$ is an initial state distribution, $\gamma \in [0,1]$ is a discount factor, and
$R: \states \times \actions \times \states \to \R$ is a reward function that depends on the current
state, action, and next state.

A \emph{trajectory} is a sequence $\tau = (s_0, a_0, s_1, a_1, \ldots)$ of states $s_t \in \states$
and actions $a_t \in \actions$. The \emph{returns} of a trajectory are
$G(\tau) := \sum_{t = 0}^{\abs{\tau} - 1} \gamma^{t} R(s_t, a_t, s_{t+1})$, where $\abs{\tau}$
is the \emph{length} of $\tau$, defined as the number of actions in $\tau$. $\abs{\tau}$ may be finite or infinite.

A \emph{policy} on an MDP is a function $\pi: \states \to \dist(\actions)$, where $\dist(\actions)$
denotes the space of probability distributions over $\actions$. Together with the initial state
distribution $\mu$ and the transition dynamics $P$, a policy induces a distribution over trajectories.
The \emph{expected returns} of a policy $\pi$ are then $G(\pi) := \E_{\tau \sim \mu, \pi, P}[G(\tau)]$.
Throughout \cref{sec:calculus}, we will assume a discounted infinite horizon setting ($\gamma < 1$, $\abs{\tau} = \infty$)
and that the reward function $R$ is bounded. This ensures that the expected returns exist and are
finite.
A policy is called \emph{optimal} if it maximizes expected returns.

\subsection{Potential shaping}
The purpose of the reward function $R$ of an MDP is to specify desirable behavior for the agent,
i.e.\ optimal policies. From that perspective, two reward functions $R$ and $R'$ can be
\emph{equivalent} in the sense that both $R$ and $R'$ induce the same set of optimal policies.
For example, scaling a reward function by a positive constant never affects
which policies are optimal, no matter what the transition dynamics $P$ are. Interestingly,
the same holds for a more complex reward transformation, called \emph{potential shaping}~\citep{ng1999}.

Let $M = \MDP$ be an MDP. A \emph{potential}
on $M$ is a function $\potential: \states \to \R$.
We can use $\potential$ to shape the reward function $R$ as follows:
\begin{equation}
    R'(s, a, s') := R(s, a, s') + \gamma\potential(s') - \potential(s)\,.
\end{equation}
The shaping term $\gamma\potential(s') - \potential(s)$ is called the \emph{potential shaping}
induced by $\potential$.

The returns of a trajectory $\tau = (s_0, a_0, s_1, a_1, \ldots, s_T)$
under the shaped reward function $R'$ are
$G'(\tau) = G(\tau) + \gamma^T \potential(s_T) - \potential(s_0)$,
where $G$ are the returns under $R$. The initial state $s_0$ does not depend
on the policy, and the $\gamma^T \potential(s_T)$ term vanishes if $T \to \infty$,
$\gamma < 1$, and $\potential$ is bounded. So the returns $G'$ and $G$ differ only by a constant that does not
depend on the policy. Thus, $R$ and $R'$ induce identical optimal policies,
no matter what the transition dynamics or initial state distribution of the MDP are.

\section{Calculus with discounting}\label{sec:calculus}

\begin{figure*}
    \centering
      \begin{tikzpicture}
        \newcommand{\implarrow}[1]{
            \draw ([shift={(0pt,5pt)}]A) coordinate(a) to[Double] (a-|B);
            \draw ([shift={(0pt,-5pt)}]A-|B) coordinate(a) to[Double] (a-|A);
            \node[impltext] at (M) {#1};
        }
        % intensity (v)
            \csdef{node11v}{100}
            \csdef{node12v}{60}
            \csdef{node13v}{80}
            \csdef{node21v}{80}
            \csdef{node22v}{60}
            \csdef{node23v}{30}
            \csdef{node31v}{20}
            \csdef{node32v}{40}
            \csdef{node33v}{20}
        % potential shaping
        \begin{scope}[x=15pt,y=0.6pt,shift={(0pt,0pt)}]
            \draw[->] (0,0) coordinate(O) --++(0:6) coordinate(E) node[at end,right]{$t$};
            \coordinate (M) at ($(O)!0.5!(E)$);
            \coordinate (R1) at (6,50);
            \draw[->] (0,0)--++(90:100);
            \setcounter{barpos}{0}
            \foreach \n in {33,32,22,21,11} {\stepcounter{barpos}
                \draw[draw=COLint!\csuse{node\n v},bar] (\thebarpos,0)--++(90:\csuse{node\n v}) coordinate (max\thebarpos);
                \coordinate (pos\thebarpos) at (\thebarpos.48,0);
            }
            \setcounter{barpos}{1}
            \foreach \n in {1,2,3,4} {\stepcounter{barpos}\draw[path2] (pos\n|-max\n)--(pos\n|-max\thebarpos);}
            \node[figlabel] at ([shift={(0,-14pt)}]M) {bounded \textcolor{COLint}{potential} \textcolor{COLpath2}{shaping}};
        \end{scope}
        % conservative
        \begin{scope}[shift={(185pt,26.2pt)}]
            % nodes
                \matrix (m) [matrix of nodes,nodes in empty cells,row sep=15pt,column sep=15pt] {& & \\ & & \\ & & \\};
                \foreach \row in {1,2,3} {\foreach \col in {1,2,3} {\node[graphnodee] (n-\row-\col) at (m-\row-\col) {\strut};}}
                \coordinate (M) at ($(m-3-1.south)!0.5!(m-3-3.south)$);
                \coordinate (L2) at (n-2-1.west);
                \coordinate (R2) at (n-2-3.east);
                \node at ([yshift=15pt]n-1-2) {$\color{COLpath2}\int_\sigma R \color{black} = \color{COLpath1}\int_\tau R$};
            % paths
                \foreach \n[remember=\n as \lastn (initially 3-3)] in {2-3,1-3,1-2,1-1} {\draw[path1,overpath] (n-\lastn)--(n-\n);}
                \foreach \n[remember=\n as \lastn (initially 3-3)] in {3-2,2-2,2-1,1-1} {\draw[path2,overpath] (n-\lastn)--(n-\n);}
                \path (n-2-3)--(n-1-3) node[midway,right,xshift=2pt]{\color{COLpath1}$\tau$};
                \path (n-3-3)--(n-3-2) node[midway,below,yshift=-2pt]{\color{COLpath2}$\sigma$};
            % nodes
                \node[figlabel] at ([shift={(0,-14pt)}]M) {conservative};
        \end{scope}
        % optimality-preserving
        \begin{scope}[shift={(340pt,68pt)}]
            % nodes
                \matrix (m) [matrix of nodes,nodes in empty cells,row sep=10pt,column sep=25pt] {& & \\};
                \foreach \n in {1,2,3} {\node[graphnodee] (n-1-\n) at (m-1-\n) {\strut};}
                \node[below] (Q) at (n-1-2.south) {$Q^*(s,\text{\textcolor{COLpath2}{left}})=Q^*(s,\text{\textcolor{COLpath2}{right}})$};
                \coordinate (M) at (Q.south);
                \coordinate (L3) at (Q.north west);
            % paths
                \foreach \n in {1,3} {\draw[path2,dashed] (n-1-2)--(n-1-\n);}
            % nodes
                \node[figlabel] at ([shift={(0,-7pt)}]M) {optimality-preserving};
        \end{scope}
        % curl-free
        \begin{scope}[shift={(340pt,-1pt)}]
            % nodes
                \matrix (m) [matrix of nodes,nodes in empty cells,row sep=10pt,column sep=25pt] {& & \\ & & \\ & & \\};
                \foreach \n in {1-2,3-2,2-1,2-3} {\node[graphnodee] (n-\n) at (m-\n) {\strut};}
                \coordinate (M) at (m-3-2.south);
                \coordinate (L4) at (n-2-1-|L3);
                \node[circle,scale=1.3,inner sep=0.pt] (n-2-2) at (m-2-2) {$\circlearrowright$};
            % paths
                \foreach \n[remember=\n as \lastn (initially 2-1)] in {3-2,2-3} {\draw[path1] (n-\lastn)--(n-\n);}
                \foreach \n[remember=\n as \lastn (initially 2-1)] in {1-2,2-3} {\draw[path2] (n-\lastn)--(n-\n);}
            % background
                \begin{scope}[on background layer]
                    \fill[evarea] (n-2-1.center)--(n-3-2.center)--(n-2-3.center)--(n-1-2.center)--cycle;
                \end{scope}
            % nodes
                \node[figlabel] at ([shift={(0,-9pt)}]M) {curl-free};
        \end{scope}
        \path ([shift={(10pt,0pt)}]R1) -- ([shift={(-10pt,0pt)}]R1-|L2) coordinate[at start] (A) coordinate[at end] (B) coordinate[midway] (M);
        \implarrow{};
        \path ([shift={(14pt,0pt)}]R2|-L3) coordinate (r1) -- ([shift={(-10pt,0pt)}]r1-|L3) coordinate (r2) coordinate[at start] (A) coordinate[at end] (B) coordinate[midway] (M);
        \implarrow{if distinguishing\\ actions};
        \path (r1|-L4) -- (r2|-L4) coordinate[at start] (A) coordinate[at end] (B) coordinate[midway] (M);
        \implarrow{if diamond-complete};
    \end{tikzpicture}%

    \caption{Reward functions can have several closely related properties: they may be a pure potential
      shaping term, be conservative, optimality-preserving, and curl-free. The implications from left to right always hold. The implications from right to left hold under various assumptions on the set of allowed
      transitions.}\label{fig:overview}
\end{figure*}

The name \enquote{\emph{potential} shaping} is inspired by physics.
There, the gradient of a potential is used to describe \emph{conservative force fields},
where the potential energy needed to traverse a trajectory does not depend on the precise
trajectory, only on the start and end point. Analogously, only the first and last state
of a trajectory in an MDP affect the returns of the potential shaping term.

The goal of this section is to formalize this analogy by introducing a framework
of calculus in which potential shaping is the gradient of the potential, and thus is
directly analogous to a conservative vector field. To do so,
we generalize the theory of \emph{discrete calculus on graphs} by introducing
a discount factor. More specifically, the gradient operator on graphs is usually
defined as
\begin{equation}
    (\grad_{\text{undiscounted}} \potential)(s, s') = \potential(s') - \potential(s)\,,
\end{equation}
where $\potential$ is a function on the nodes of the graph, and $\grad_{\text{undiscounted}} \potential$
is a function on edges. This is the graph equivalent of the gradient $\nabla \Phi$ of a scalar field $\Phi$
in vector calculus.
It already looks very similar to the potential shaping term
$\gamma\potential(s') - \potential(s)$; the only
difference is the additional discount factor in the first term.

An important part of our contribution is the introduction of the formal framework
for the calculus of reward functions itself. But we also use this framework to derive
results that generalize those from the original potential shaping paper~\citep{ng1999}.
\Cref{fig:overview} summarizes these new results: we consider four different properties
that reward functions can have and show under which conditions various implications
between them hold. Within this framework, we prove the result by \citet{ng1999} that
\enquote{potential shaping $\iff$ optimality-preserving} under weaker assumptions
than the original paper required.

In the following subsections, we will introduce our framework step by step, while defining
the various terms in \cref{fig:overview} and formally stating the implications once the
necessary machinery is in place.

\subsection{Transition graphs}

We can turn an MDP into a graph-like structure by using the states as nodes, and possible transitions
as directed edges. There may be multiple edges between two nodes, indexed by different actions.
For our framework of calculus with discounting, we also need to include the discount factor.
That gives us the structure we will define our framework on:
\begin{definition}
    A \emph{transition graph} is a tuple $(\states, \actions, \transitions, \gamma)$, where
    $\states$ and $\actions$ are arbitrary sets, called the \emph{state space} and \emph{action space} respectively,
    $\transitions \subseteq \states \times \actions \times \states$ is the set of
    \emph{allowed transitions}, and
    $\gamma \in [0, 1]$ is the \emph{discount factor}.
\end{definition}

Because we use the infinite-horizon setting, we will assume throughout this section that $\gamma < 1$
and that every state has at least one outgoing transition (i.e.\ there are no \enquote{dead ends}).
This is no real restriction since terminal states can always be modeled as states with a self-loop
to move from the finite-horizon to the infinite-horizon setting.

We say that an MDP $\MDP$ is \emph{compatible} with a transition
graph $(\states, \actions, \transitions, \gamma)$ if $P(s'|s, a) = 0$ for all $(s, a, s') \notin \transitions$.
Note that we do not require the converse: $P(s'|s, a)$ may be zero for some $(s, a, s') \in \transitions$.
Intuitively, we can think of $\transitions$ as representing partial knowledge about
the transition dynamics: we know that certain transitions are impossible (namely those not in $\transitions$),
but otherwise make no restrictions.
\Cref{fig:compatibility} shows this formalism applied to a simple $2\times 2$ gridworld.

\begin{figure*}[!ht]\centering
      \begin{tikzpicture}
        \newlength{\looprad}\setlength{\looprad}{5pt}
        \newcommand{\selfloop}[1]{
            \coordinate (SL) at (#1);
            \draw[edge,->] ([shift={(45:-0.5\edgelw)}]#1.45) arc [start angle=225,end angle=-135,radius=\looprad];
        }
        \newcommand{\selfloops}{
            \foreach \x in {0,1}
                {\foreach \y in {0,1} {
                    \coordinate (SL) at (\x,\y);
                    \draw[->] ([yshift=-\looprad]SL) arc[start angle=268,end angle=-88,radius=\looprad];
                }}
        }
        \newcommand{\house}[2]{
            \coordinate (Hbl) at (-0.5,-0.5);
            \coordinate (Htr) at (1.5,1.5);
            \coordinate (Hbr) at (Hbl-|Htr);
            \coordinate (Htl) at (Hbl|-Htr);
            \coordinate (C) at ($(Hbl)!0.5!(Htr)$);
            \draw[line width=0.6pt] (Hbl) rectangle (Htr);
            \draw[line width=0.3pt] ($(Hbl)!0.5!(Htl)$) coordinate (L)--($(Hbr)!0.5!(Htr)$) coordinate (R);
            \draw[line width=0.3pt] ($(Hbl)!0.5!(Hbr)$) coordinate (B)--($(Htl)!0.5!(Htr)$) coordinate (T);
            \coordinate (l) at ($(L)!0.5!(C)$);\coordinate (l1) at ($(L)!1/3!(C)$);\coordinate (l2) at ($(L)!2/3!(C)$);\coordinate (l3) at ($(L)!3/4!(C)$);
            \coordinate (r) at ($(R)!0.5!(C)$);\coordinate (r1) at ($(R)!1/3!(C)$);\coordinate (r2) at ($(R)!2/3!(C)$);\coordinate (r3) at ($(R)!3/4!(C)$);
            \coordinate (b) at ($(B)!0.5!(C)$);\coordinate (b1) at ($(B)!1/3!(C)$);\coordinate (b2) at ($(B)!2/3!(C)$);\coordinate (b3) at ($(B)!3/4!(C)$);
            \coordinate (t) at ($(T)!0.5!(C)$);\coordinate (t1) at ($(T)!1/3!(C)$);\coordinate (t2) at ($(T)!2/3!(C)$);\coordinate (t3) at ($(T)!3/4!(C)$);
            \node[above,font=\footnotesize,align=center] at (T) {\strut#1\strut};
            \ifstrequal{#2}{Y}
                {\def\MDPcheck{\color{green}{\faCheckCircle[regular]}}}
                {\def\MDPcheck{\color{red}{\faTimesCircle[regular]}}}
            \node[below,font=\footnotesize,align=center] at (B) {\strut\MDPcheck\strut};
        }
        \newcommand{\hdir}[2]{\draw[line width=0.6pt,#1] ([xshift=-\looprad]#2)--++(0:2\looprad);}
        \newcommand{\vdir}[2]{\draw[line width=0.6pt,#1] ([yshift=-\looprad]#2)--++(90:2\looprad);}
        \begin{scope}[x=40pt,y=40pt,xshift={-21\looprad},yshift=-\looprad]
            \node[emptyvertex,fill=black!20] (V00) at (0,0) {};
            \node[emptyvertex,fill=black!20] (V01) at (0,1) {};
            \node[emptyvertex,fill=black!20] (V10) at (1,0) {};
            \node[emptyvertex,fill=black!20] (V11) at (1,1) {};
            \foreach \P in {00,01,10,11} {\selfloop{V\P}}
            \draw[edge,<->] (V00)--(V10);
            \draw[edge,<->] (V01)--(V11);
            \draw[edge,->] (V11)--(V10);
            \node[font=\footnotesize] (BN) at (0.5,-0.8) {\strut transition graph};
        \end{scope}
        \begin{scope}[x=6\looprad,y=6\looprad,xshift={-1\looprad}]
            \house{maximal\\ compatible MDP}{Y}
            \selfloops
            \draw[room] (L)--(C);
            \draw[room] (r)--(R);
            \hdir{<-}{t1}
            \hdir{->}{t2}
            \vdir{<-}{r3}
            \hdir{->}{b1}
            \hdir{<-}{b2}
        \end{scope}
        \begin{scope}[x=6\looprad,y=6\looprad,xshift={13\looprad}]
            \house{more walls}{Y}
            \selfloops
            \draw[room] (L)--(R);
            \draw[room] (T)--(t);
            \draw[room] (B)--(b);
            \hdir{->}{t3}
            \hdir{<-}{b3}
            \coordinate (BnodeY) at (B);
        \end{scope}
        \begin{scope}[x=6\looprad,y=6\looprad,xshift={27\looprad}]
            \house{no ``stay'' action}{Y}
            \draw[room] (L)--(C);
            \draw[room] (r)--(R);
            \hdir{<-}{t1}
            \hdir{->}{t2}
            \vdir{<-}{r3}
            \hdir{->}{b1}
            \hdir{<-}{b2}
        \end{scope}
        \begin{scope}[x=6\looprad,y=6\looprad,xshift={45\looprad}]
            \house{full grid world}{N}
            \selfloops
            \hdir{<-}{t1}
            \hdir{->}{t2}
            \hdir{->}{b1}
            \hdir{<-}{b2}
            \vdir{->}{r1}
            \vdir{<-}{r2}
            \vdir{<-}{l1}
            \vdir{->}{l2}
            \coordinate (BnodeN) at (B);
        \end{scope}
        \node[font=\footnotesize,green] at (BN-|BnodeY) {\strut compatible MDPs};
        \node[font=\footnotesize,red] at (BN-|BnodeN) {\strut incompatible MDP};
    \end{tikzpicture}%

    \caption{
        A simple $2 \times 2$ gridworld example of transition graphs. The transition graph
        specifies that direct transitions between the two left states are impossible, and that it is
        impossible to go upwards on the right. All compatible MDPs reproduce these restrictions, but
        may also add additional ones. More subtly, the transition graph also forbids any diagonal
        transitions (which would correspond to \enquote{teleportation} in a gridworld). Not shown in the
        figure is that MDPs will also have to specify a reward function and initial state
        distribution---these have nothing to do with compatibility.
    }\label{fig:compatibility}
\end{figure*}

Just like in MDPs, a \emph{potential} on a transition graph is a function
$\potential: \states \to \R$.
By contrast, a \emph{reward function} on a transition graph is a function
$R: \transitions \to \R$.
Note this is defined only on the allowed transitions $\transitions$,
not on all of $\states \times \actions \times \states$: if we know that a transition
is impossible, we do not need to specify its reward.
We write $\scalars$ and $\vectors$ for the space of potentials and reward functions respectively.

\subsection{The discounted gradient and line integral}
As we have seen above, the potential shaping term is very similar to the gradient
in calculus on graphs, differing only in the discount factor. So we will refer
to this discounted version as a \enquote{gradient} as well---we will see repeatedly
that this name is very much justified by similarities to the undiscounted gradient.
\begin{definition}
    The \emph{gradient} on a transition graph is the operator
    $\grad: \scalars \to \vectors$ defined by
    \begin{equation} \label{eq:grad_transition}
        (\grad \potential)(s, a, s') := \gamma \potential(s') - \potential(s)\,.
    \end{equation}
\end{definition}
Note that the gradient does not depend on the action $a$, just like the potential shaping term.
In terms of the graph, the gradient depends only on the origin and target of an edge, not on the edge itself.

Given that the gradient operator on Euclidean space maps from scalar fields to vector fields,
this definition suggests thinking of potentials as scalar fields and
reward functions as vector fields. Potentials as scalar fields is intuitively reasonable.
The \enquote{reward functions = vector fields} analogy is more subtle.
Define the reward function
$R: \states \times \actions \times \states \to \R$ to instead be a function with signature
$R: \states \to \R^{\actions \times \states}$.
In a tabular setting, the function space $\R^{\actions \times \states}$ is isomorphic
to the Euclidean space $\R^{\abs{\actions}\abs{\states}}$, so we can think of a reward function
as assigning a vector in Euclidean space to every state---that is precisely a vector field
over states.
This is only an informal motivation. We refer interested readers to \cref{sec:vectors}
for a more formal explanation of how reward functions are related to vector fields, as well
as some discrepancies.

To define the analogue of a conservative vector field in our formalism, we require the concept of a \emph{line integral}:
\begin{definition}
    Let $\tau = (s_0, a_0, s_1, \ldots)$ be a trajectory in a transition graph with discount factor $\gamma$, and let
    $R$ be a reward function on that transition graph.
    Then the (discounted) \emph{line integral} of $R$ over $\tau$ is defined as
    \begin{equation}
        \int_\tau R := \sum_{t = 0}^{\abs{\tau} - 1} \gamma^t R(s_t, a_t, s_{t + 1})\,.
    \end{equation}
\end{definition}
Throughout \cref{sec:calculus}, we will assume that $\gamma < 1$ and that all reward functions
are bounded. This ensures that even infinite line integrals always exist.
Notice that the line integral simply corresponds to the returns in reinforcement learning: $G(\tau) = \int_{\tau} R$.
One important note is that we only allow trajectories $\tau$ where each transition $(s_t, a_t, s_{t + 1})$
is in the set of allowed transitions $\transitions$. Otherwise, the line integral along
$\tau$ would not be defined, since reward functions are only defined on $\transitions$.

\subsection{Conservative reward functions}\label{sec:conservative}
We have hinted at the idea that potential shaping is related to \emph{conservative vector fields}
in vector calculus and shall now make this connection more precise. A vector field is conservative
if its line integrals depend only on start and end point, not on the path itself.
For our purposes, a slightly different definition turns out to be more suitable: in the infinite-horizon
RL setting we use, we are mainly interested in \emph{infinite} trajectories, which don't have an end point.
We will therefore use the following definition:
\begin{definition}
    A reward function $R$ is called \emph{conservative} if for any two infinite trajectories
    $\sigma, \tau$ with the same initial state, we have $\int_\sigma R = \int_\tau R$.
\end{definition}
Recall that we assume $\gamma < 1$ and bounded reward functions, so the integrals always exist.

More directly analogous to conservative vector fields would be the following notion:
\begin{restatable}{definition}{finitelyConservativeDef}
    A reward function $R$ is called \emph{finitely conservative} if for any two finite trajectories
    $\sigma, \tau$ with the same initial state, final state, and length, we have
    $\int_\sigma R = \int_\tau R$.
\end{restatable}
We need the length of $\sigma$ and $\tau$ to be the same because of
the discount factor that isn't present in vector calculus. Without this assumption,
finite conservativeness would almost never be satisfied and thus be a mostly useless concept.
Finite conservativeness is a weaker notion than the infinite version:
\begin{restatable}{proposition}{finitelyConservative}\label{thm:finitely_conservative}
    Conservative reward functions are finitely conservative.
\end{restatable}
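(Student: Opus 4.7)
The plan is to reduce finite conservativeness to the infinite version by gluing a common ``tail'' onto the two finite trajectories. Concretely, let $\sigma = (s_0, a_0, \ldots, s_T)$ and $\tau = (s_0, b_0, \ldots, s_T)$ be two finite trajectories sharing initial state, final state $s_T$, and common length $T$. I would like to build infinite trajectories $\sigma^\infty$ and $\tau^\infty$ that agree from time $T$ onwards and then invoke the hypothesis that $R$ is conservative.

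To build the tail, I would use the standing assumption from the start of \cref{sec:calculus} that every state has at least one outgoing transition in $\transitions$. Applied repeatedly starting from $s_T$, this produces some infinite trajectory $\rho = (s_T, c_0, s_{T+1}, c_1, \ldots)$ whose every transition lies in $\transitions$. Concatenating gives $\sigma^\infty := \sigma \cdot \rho$ and $\tau^\infty := \tau \cdot \rho$, both infinite trajectories starting at $s_0$ whose transitions all lie in $\transitions$, so the line integrals $\int_{\sigma^\infty} R$ and $\int_{\tau^\infty} R$ are well-defined (and finite, by the blanket boundedness assumption).

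The core of the argument is then the splitting identity
\begin{equation}
\int_{\sigma^\infty} R \;=\; \int_\sigma R \;+\; \gamma^T \int_\rho R\,,
\end{equation}
which follows directly from the definition of the discounted line integral and the re-indexing $t \mapsto t - T$ on the tail; the same identity holds for $\tau^\infty$ with $\int_\tau R$ in place of $\int_\sigma R$. Since $R$ is conservative and $\sigma^\infty, \tau^\infty$ share their initial state $s_0$, we have $\int_{\sigma^\infty} R = \int_{\tau^\infty} R$, and the common tail contribution $\gamma^T \int_\rho R$ cancels, yielding $\int_\sigma R = \int_\tau R$.

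The only step that requires any care is the existence of the tail $\rho$: this is where the no-dead-ends convention (every state has at least one outgoing transition in $\transitions$) is essential, and without it the statement could fail for trajectories ending at a state with no continuation. Apart from that the argument is a purely algebraic manipulation of the series, so I do not expect any further obstacles.
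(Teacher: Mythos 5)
Your proof is correct and follows essentially the same route as the paper's: both append a common infinite tail starting at the shared terminal state (using the no-dead-ends assumption), apply the splitting identity $\int_{\sigma\cdot\rho} R = \int_\sigma R + \gamma^T \int_\rho R$, invoke conservativeness on the two concatenated infinite trajectories, and cancel the common tail term. No gaps.
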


In this paper, we mainly focus on (infinite) conservativeness since this simplifies some of the theory.
However, the \enquote{spirit} of the results is still similar using finite conservativeness
and in \cref{sec:finitely_conservative}, we discuss the relation between the two concepts in more
detail. Because of \cref{thm:finitely_conservative}, we do state results in the main paper using
finite conservativeness if possible whenever that makes them stronger.

\begin{figure*}
    \centering
      \begin{tikzpicture}
        % icons (i) and intensity (v)
            \csdef{node11i}{\resizebox{!}{1em}{\faFlagCheckered}}     \csdef{node11v}{100}
            \csdef{node12i}{\resizebox{!}{1em}{$\bullet$}}            \csdef{node12v}{60}
            \csdef{node13i}{\resizebox{!}{1em}{$\blacksquare$}}       \csdef{node13v}{80}
            \csdef{node21i}{\resizebox{!}{1em}{$\circ$}}              \csdef{node21v}{85}
            \csdef{node22i}{\resizebox{!}{1em}{$\square$}}            \csdef{node22v}{60}
            \csdef{node23i}{\resizebox{!}{1em}{$\blacktriangle$}}     \csdef{node23v}{30}
            \csdef{node31i}{}                                         \csdef{node31v}{20}
            \csdef{node32i}{\resizebox{!}{1em}{$\vartriangle$}}       \csdef{node32v}{30}
            \csdef{node33i}{\resizebox{!}{1em}{\faBuilding[regular]}} \csdef{node33v}{10}
        % xy-axis
            \newcommand{\xybaraxis}[1]{
                \draw[->] (0,0)--++(0:6) node[at end,right]{$t$}
                    coordinate[at start](Sx)
                    coordinate[midway](Mx)
                    coordinate[at end](Ex)
                    ;
                \draw[->] (0,0)--++(90:110) coordinate[midway](My) coordinate[at end](Ey);
                \node[COLpath#1] at (Mx|-Ey) {grad $\Phi$};
                \node[COLint] at (Ex|-My) {$\Phi$};
                \draw[dashed] (Sx|-max5)--(max5);
                \draw[dashed] (Sx|-max1)--(max1);
                \draw[<->] ([xshift=5pt]Sx|-max1)coordinate(P)--(P|-max5) node[sloped,midway,below,inner sep=0.5pt,font=\footnotesize]{returns};
            }
        % bar chart scale
            \newlength{\barchartx}\setlength{\barchartx}{18pt}
            \newlength{\barcharty}\setlength{\barcharty}{0.9pt}
            \newlength{\barchartyos}\setlength{\barchartyos}{-43pt}
        % graph
        \begin{scope}
            % nodes
                \matrix (m) [matrix of nodes,nodes in empty cells,row sep=40pt,column sep=40pt] {& & \\ & & \\ & & \\};
                \foreach \row in {1,2,3} {\foreach \col in {1,2,3} {\node[graphnode,fill=COLint!\csuse{node\row\col v}] (n-\row-\col) at (m-\row-\col) {\csuse{node\row\col i}};}}
            % paths
                \foreach \n[remember=\n as \lastn (initially 3-3)] in {2-3,1-3,1-2,1-1} {\draw[path1] (n-\lastn)--(n-\n);}
                \foreach \n[remember=\n as \lastn (initially 3-3)] in {3-2,2-2,2-1,1-1} {\draw[path2] (n-\lastn)--(n-\n);}
        \end{scope}
        % bar chart 1
        \begin{scope}[x=\barchartx,y=\barcharty,shift={(75pt,\barchartyos)}]
            \setcounter{barpos}{0}
            \foreach \n in {33,23,13,12,11} {\stepcounter{barpos}
                \node[below] at (\thebarpos,0) {\csuse{node\n i}};
                \draw[draw=COLint!\csuse{node\n v},bar] (\thebarpos,0)--++(90:\csuse{node\n v}) coordinate (max\thebarpos);
                \coordinate (pos\thebarpos) at (\thebarpos.5,0);
            }
            \setcounter{barpos}{1}
            \foreach \n in {1,2,3,4} {\stepcounter{barpos}\draw[path1] (pos\n|-max\n)--(pos\n|-max\thebarpos);}
            % axis
            \xybaraxis{1}
        \end{scope}
        % bar chart 2
        \begin{scope}[x=\barchartx,y=\barcharty,shift={(205pt,\barchartyos)}]
            \setcounter{barpos}{0}
            \foreach \n in {33,32,22,21,11} {\stepcounter{barpos}
                \node[below] at (\thebarpos,0) {\csuse{node\n i}};
                \draw[draw=COLint!\csuse{node\n v},bar] (\thebarpos,0)--++(90:\csuse{node\n v}) coordinate (max\thebarpos);
                \coordinate (pos\thebarpos) at (\thebarpos.5,0);
            }
            \setcounter{barpos}{1}
            \foreach \n in {1,2,3,4} {\stepcounter{barpos}\draw[path2] (pos\n|-max\n)--(pos\n|-max\thebarpos);}
            % axis
            \xybaraxis{2}
        \end{scope}
    \end{tikzpicture}%

    \caption{Pure potential shaping reward functions are finitely \emph{conservative}: the returns they generate
      do not depend on the precise trajectory, only the start and end state and the length (see \cref{thm:fundamental}).
      Here, we illustrate the reason: at each step, the reward
      is a finite difference between potentials; adding up all those differences gives the overall difference
      between the first and last potential. We show the result for finite trajectories without discounting,
      but the principle remains the same with discounting ($\gamma < 1$). With discounting and a bounded potential,
      this is also true for \emph{infinite} trajectories.}\label{fig:fundamental}
\end{figure*}

An important result in vector calculus is that gradient fields are always conservative.
As \cref{fig:fundamental} illustrates, this also holds for potential shaping---this is simply
a restatement of the fact that potential shaping changes returns by a term only depending
on the first and last state. Just like in vector calculus, the converse is also true:
any conservative reward function must already be a pure potential shaping term. Formally:
\begin{restatable}{proposition}{fundamentalTheorem}\label{thm:fundamental}
    Let $\graph$ be a transition graph.
    \begin{enumerate}[(i)]
        \item The gradient of any potential $\potential$ on $\graph$ is finitely conservative.
              If $\potential$ is bounded, then its gradient is conservative.
              Specifically, for a finite trajectory $\tau = (s_0, \ldots, s_T)$,
              \begin{equation}
                \int_\tau \grad \potential = \gamma^T \potential(s_T) - \potential(s_0)\,,
              \end{equation}
              and for an infinite trajectory $\tau$ and bounded $\potential$,
              $\int_\tau \grad \potential = -\potential(s_0)$.
              \label{item:gradient_is_conservative}
        \item If $R$ is a conservative reward function on $\graph$,
              then $R$ is the gradient of some bounded potential.\label{item:conservative_is_gradient}
    \end{enumerate}  
\end{restatable}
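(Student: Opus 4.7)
The plan is to prove (i) by a telescoping sum identity and (ii) by constructing a potential from the line integrals of $R$ along infinite trajectories, using conservativeness to show well-definedness.

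For part (i), I would start from the definition of $\grad \potential$ and compute directly for a finite trajectory $\tau = (s_0, a_0, \ldots, s_T)$:
\begin{equation*}
\int_\tau \grad \potential = \sum_{t=0}^{T-1} \gamma^t\bigl(\gamma \potential(s_{t+1}) - \potential(s_t)\bigr) = \sum_{t=0}^{T-1}\bigl(\gamma^{t+1}\potential(s_{t+1}) - \gamma^t \potential(s_t)\bigr),
\end{equation*}
which telescopes to $\gamma^T \potential(s_T) - \potential(s_0)$. Finite conservativeness is immediate since this quantity depends only on $s_0$, $s_T$, and $T$. For the infinite case with bounded $\potential$, I would take $T \to \infty$ in the partial sum: $|\gamma^T \potential(s_T)| \le \gamma^T \|\potential\|_\infty \to 0$, leaving $\int_\tau \grad \potential = -\potential(s_0)$, which depends only on $s_0$, proving conservativeness.

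For part (ii), the natural construction is to pick, for each state $s$, any infinite allowed trajectory $\tau_s$ starting at $s$ (which exists by the standing assumption that there are no dead ends) and define $\potential(s) := -\int_{\tau_s} R$. Well-definedness (independence of the choice of $\tau_s$) follows directly from conservativeness of $R$. Boundedness is immediate since $R$ is bounded and $\gamma < 1$, giving $|\potential(s)| \le \|R\|_\infty/(1-\gamma)$. To verify $\grad \potential = R$, I would take any allowed transition $(s,a,s')$, let $\sigma$ be an infinite trajectory starting at $s'$, and prepend $(s,a,s')$ to form an infinite trajectory $\sigma'$ starting at $s$. Splitting the line integral gives
\begin{equation*}
-\potential(s) = \int_{\sigma'} R = R(s,a,s') + \gamma \int_\sigma R = R(s,a,s') - \gamma\potential(s'),
\end{equation*}
and rearranging yields $R(s,a,s') = \gamma \potential(s') - \potential(s) = (\grad \potential)(s,a,s')$.

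The main obstacle, if any, lies in part (ii): one has to be slightly careful about the existence and proper selection of the trajectories $\tau_s$ and the extension $\sigma'$ used in the identity above. The no-dead-ends assumption ensures both exist, and conservativeness of $R$ is exactly what guarantees that $\potential$ is well-defined regardless of the choices made. The rest of the argument reduces to routine manipulation of a geometric tail and a one-step splitting of the line integral.
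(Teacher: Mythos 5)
Your proposal matches the paper's own proof essentially step for step: part (i) via the same telescoping sum, and part (ii) via the same construction $\potential(s) := -\int_{\tau_s} R$, with well-definedness from conservativeness, the one-step splitting of the line integral to verify $\grad\potential = R$, and the geometric bound $\norm{R}_\infty/(1-\gamma)$ for boundedness. The argument is correct and complete as written.
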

In particular, note that a reward function is conservative if and only if it is the gradient of a bounded potential.

Part \ref{item:gradient_is_conservative} is sometimes called the \emph{fundamental theorem of calculus for line integrals}
in the context of vector calculus because it generalizes the one-dimensional fundamental theorem of calculus.
A result similar to part \ref{item:conservative_is_gradient} was also proven by \citet[Lemma~B.3]{skalse2022}.

There is a version of part \ref{item:conservative_is_gradient} for finitely conservative reward functions but
it is slightly more complicated, see \cref{thm:finitely_conservative_potential} in \cref{sec:finitely_conservative}.

\subsection{Optimality-preserving reward functions}\label{sec:optimality-preserving}

The main reason that (bounded) potential shaping and conservative reward functions are interesting
is the fact that using them as shaping terms does not affect optimal policies.

\begin{definition}
    A reward function $F$ on a transition graph
    $\graph$ is \emph{optimality-preserving} if for any MDP $M$ compatible with $\graph$ and any
    reward function $R$, $R + F$ has the same set of optimal policies in $M$ as $R$.
\end{definition}
\Cref{thm:optimality_preserving} in \cref{sec:more_results} contains a few alternative
characterizations of when a reward function is optimality-preserving.

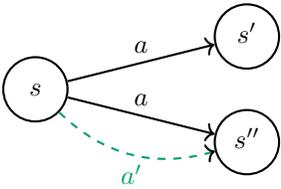
\begin{figure}
    \begin{minipage}{0.2\textwidth}
      \begin{tikzpicture}[x=40pt,y=20pt]
        \node[emptyvertex,text width=10pt,align=center] (V1) at (0,0)  {$s$};
        \node[emptyvertex,text width=10pt,align=center] (V2) at (2,1)  {$s'$};
        \node[emptyvertex,text width=10pt,align=center] (V3) at (2,-1) {$s''$};
        % \node[below] at (0.5,-1.5) {distinguishing actions};
        \draw[edge] (V1)--(V2) node[midway,above]{$a$};
        \draw[edge] (V1)--(V3) node[midway,above]{$a$};
        \draw[edgedash] (V1)to[bend right] node[midway,below,COLpath2]{$a'$} (V3);
    \end{tikzpicture}%

    \end{minipage}
    \hfill
    \begin{minipage}{0.25\textwidth}
        \caption{A transition graph has \emph{distinguishing actions} if $a'$ exists.}\label{fig:distinguishing_actions}
    \end{minipage}
\end{figure}

While conservative reward functions are always optimality-preserving,
the converse requires an additional assumption about the topology of the transition graph.
We say that a transition graph has \emph{distinguishing actions} if
for transitions $s \to s'$ and $s \to s''$ in $\graph$ with $s' \neq s''$,
there are actions $a \neq a'$ such that $(s, a, s') \in \transitions$ and
$(s, a', s'') \in \transitions$ (see \cref{fig:distinguishing_actions}).
Intuitively, this means the transition graph admits a compatible
MDP where the agent can control whether it ends up in $s'$ or $s''$.

\begin{restatable}{theorem}{conservativeOptimality}\label{thm:conservative_optimality}
    Let $F$ be a reward function on a transition graph $\graph$.
    If $F$ is conservative, then $F$ is optimality-preserving.

    If $\graph$ has distinguishing actions, the converse also holds,
    so $F$ is conservative if and only if it is optimality-preserving.
\end{restatable}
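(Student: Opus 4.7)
The plan is to prove the forward direction directly from Proposition~\ref{thm:fundamental}. For a conservative $F$, part~(\ref{item:conservative_is_gradient}) gives $F = \grad\potential$ for some bounded $\potential$, i.e.\ the potential-shaping form $F(s,a,s')=\gamma\potential(s')-\potential(s)$. Part~(\ref{item:gradient_is_conservative}) then yields $\int_\tau F = -\potential(s_0)$ for every infinite trajectory, so in any compatible MDP
\begin{equation*}
G_{R+F}(\pi) = G_R(\pi) - \E_{s_0\sim\mu}[\potential(s_0)].
\end{equation*}
The shift is policy-independent, so $R$ and $R+F$ share the same optimal policies.

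For the converse I would proceed via a value-function/Bellman argument. If $F$ is optimality-preserving, then taking $R\equiv 0$ in any compatible MDP makes every policy trivially optimal under $R$, and optimality-preservation forces every policy to be optimal under $R+F=F$ too, so all policies attain the same expected $F$-return. Concentrating the initial distribution on a single state $s$ shows that $V^\pi_F(s)$ is policy-independent in every compatible MDP. Using the distinguishing-actions hypothesis, I would construct a compatible MDP $M_0$ in which every state has at least two distinct available actions (obtained from pairwise-distinct actions for the various successors), and denote the resulting common value by $V(s)$. For each transition $(s,a,s')\in\transitions$, form $M_{s,a,s'}$ from $M_0$ by redefining only $P(\cdot\mid s,a)$ to be the point mass on $s'$; any policy that avoids $a$ at $s$ sees identical dynamics in both MDPs, so by policy-independence the two value functions coincide pointwise, giving $V_{M_{s,a,s'}} \equiv V$. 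Applying the Bellman equation in $M_{s,a,s'}$ to the deterministic policy that takes $a$ at $s$,
\begin{equation*}
V(s) = F(s,a,s') + \gamma V(s'), \qquad \text{hence} \qquad F(s,a,s') = V(s) - \gamma V(s') = \grad(-V)(s,a,s').
\end{equation*}
Thus $F$ is the gradient of $-V$, which is bounded since $F$ is bounded and $\gamma<1$, so Proposition~\ref{thm:fundamental}(\ref{item:gradient_is_conservative}) delivers conservativeness.

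The main obstacle I expect is the construction of $M_0$: the distinguishing-actions hypothesis only asserts distinct actions \emph{pairwise} across pairs of successors, so assembling one MDP in which every state simultaneously has an ``avoidance'' option for every given action $a$ requires a selection argument that is delicate when successor or action sets are infinite. Degenerate states with a single outgoing transition need a brief separate treatment, but there the only transition is forced and $V(s)=F(s,a,s')+\gamma V(s')$ holds automatically. Everything else---the Bellman manipulations and the $M_0$-vs-$M_{s,a,s'}$ comparison---is routine.
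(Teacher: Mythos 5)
Your proof is correct, and while the forward direction matches the paper in substance (the paper argues directly that conservativeness makes all returns depend only on $s_0$, without detouring through the gradient representation, but both are fine), your converse takes a genuinely different route. The paper fixes two infinite trajectories $\sigma,\tau$ from $s_0$, builds a \emph{single} deterministic compatible MDP realizing both simultaneously---which requires resolving ``conflicts'' where the same action at the same state would have to lead to two different successors, using distinguishing actions plus a separate lemma that optimality-preserving rewards are action-independent---and then unrolls the Bellman equation to show $\int_\sigma F = V^*(s_0) = \int_\tau F$. You instead build a reference MDP $M_0$ and a one-transition perturbation $M_{s,a,s'}$ per allowed transition, use policy-independence of $V^\pi_F$ (the paper's \cref{thm:optimality_preserving}(iii)) to identify all these value functions, and read off $F = \grad(-V)$ from the Bellman equation; conservativeness then follows from \cref{thm:fundamental}(i) since $-V$ is bounded. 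Your route buys two things: it sidesteps the trajectory-conflict bookkeeping entirely, and it yields action-independence as a corollary ($F(s,a,s') = V(s)-\gamma V(s')$ has no $a$-dependence) rather than needing it as a lemma. It also directly establishes the stronger statement ``optimality-preserving $\Rightarrow$ bounded potential shaping,'' which the paper only obtains by chaining \cref{thm:conservative_optimality} with \cref{thm:fundamental}(ii). The obstacle you flag is less serious than you fear: distinguishing actions applied to any pair of distinct successors of $s$ already forces at least two distinct actions to appear among the transitions out of $s$, so an avoidance action for any given $a$ exists whenever $s$ has two or more successors, and the single-successor case is forced as you note. The only genuine wrinkle---how to define $P(\cdot\mid s,b)$ for actions $b$ with no allowed transitions out of $s$---is a formalization issue about ``available actions'' that the paper's own proof shares and likewise leaves implicit.
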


The forward direction of \cref{thm:conservative_optimality} is straightforward
and well-known (just not in the calculus language we use). The converse direction
is a significant generalization over the result from \citet{ng1999}, which requires
$\graph$ to be \emph{complete} (i.e.\ $\transitions = \states \times \actions \times \states$), rather than only having distinguishing actions.

The version by Ng et al.\ tells us that if we know \emph{absolutely nothing} about the transition dynamics
and want to ensure that a shaping term we add leaves the optimal policy set unchanged,
then we need to use potential shaping. But often, we have some partial knowledge about dynamics.
For example, we might know that we have a gridworld, and are only uncertain about
the placement of walls, or whether there is some noise. This still
massively restricts the space of possible transition dynamics, since it excludes any
\enquote{teleportation} between distant parts of state space. The results by Ng et al.\
do not tell us which shaping terms are valid given such partial knowledge. Our generalization,
\cref{thm:conservative_optimality} (together with \cref{thm:fundamental}), shows that (bounded)
potential shaping is \emph{still} the only additive shaping term
that preserves optimal policies, even given partial knowledge in the form of certain transition graphs.

\subsection{The curl}

In 3D vector calculus, a useful indicator of a vector field ${\vec{v}}$ being conservative is its \emph{curl} $\nabla \times \vec{v}$.
Conservative vector fields have zero curl, and under certain conditions the converse holds as well.
We define here a discounted analogue of curl for transitions graphs.
The curl of a reward function will not itself be a function on transitions,
but instead a function on \emph{diamonds} in the transition graph:
\begin{definition}
    Let $\graph = (\states, \actions, \transitions, \gamma)$ be a transition graph.
    A \emph{diamond} $\delta$ in $\graph$ is an ordered pair $\delta = (\delta_1, \delta_2)$ of length-two trajectories
    $\delta_1$ and $\delta_2$ in $\graph$ with the same start and end point.
    We write $\diamonds(\graph)$ for the set of all diamonds in $\graph$
    (and may drop the $\graph$ dependency if it is clear from context).
    We write $\diamondMaps$ for the space of all maps from diamonds to the reals.
\end{definition}

\begin{definition}
    Let $\graph$ be a transition graph and $\diamonds$ be the set of all diamonds in $\graph$.
    Then the \emph{curl} on $\graph$ is the map
    \begin{equation}
        \curl: \vectors \to \diamondMaps, \quad\quad (\curl R)(\delta) := \int_{\delta_1} R - \int_{\delta_2} R \,.
    \end{equation}
\end{definition}

One immediate observation is that, just like in the Euclidean setting, the curl
of (even just finitely) conservative reward functions is zero. Because of \cref{thm:fundamental},
the curl of any gradient reward functions (i.e.\ potential shaping) is zero as well.

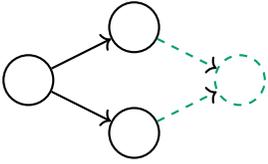
\begin{figure}
    \begin{minipage}{0.2\textwidth}
      \begin{tikzpicture}[x=40pt,y=20pt]
        \node[emptyvertex] (V1) at (0,0)  {};
        \node[emptyvertex] (V2) at (1,1)  {};
        \node[emptyvertex] (V3) at (1,-1) {};
        \node[emptyvertex,dashed,COLpath2] (V4) at (2,0)  {};
        % \node[below] at (1,-1.5) {diamond-complete};
        \draw[edge] (V1)--(V2);
        \draw[edge] (V1)--(V3);
        \draw[edgedash] (V2)--(V4);
        \draw[edgedash] (V3)--(V4);
    \end{tikzpicture}%

    \end{minipage}
    \hfill
    \begin{minipage}{0.25\textwidth}
    \caption{A transition graph is \emph{diamond-complete} if any two transitions
      with the same starting state can be completed to a diamond.}\label{fig:diamond_complete}
    \end{minipage}
\end{figure}
More interesting is the converse direction. Even in the Euclidean setting, not all
curl-free vector fields are conservative---this requires additional topological assumptions
on the domain (it must not have certain types of \enquote{holes}).
Something similar is true in our context, though the specific topological assumption
we need differs:
\begin{definition}
    A transition graph $\graph = (\states, \actions, \transitions, \gamma)$ is called
    \emph{diamond-complete} if for any two transitions $(s_0, a_0, s_1)$ and
    $(s_0, a_0', s_1')$ with the same starting state $s_0$,
    there are transitions $(s_1, a_1, s_2)$ and $(s_1', a_1', s_2)$.
    In other words, the two given transitions can be completed to a diamond (see \cref{fig:diamond_complete}).
\end{definition}

\begin{restatable}{theorem}{curlfree}\label{thm:curlfree}
    Let $\graph$ be a transition graph and $R$ a reward function on $\graph$. If $R$ is finitely conservative,
    then $\curl R = 0$. Conversely, if $\graph$ is diamond-complete,
    then $\curl R = 0$ implies that $R$ is conservative.
\end{restatable}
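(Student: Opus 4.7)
The forward direction is immediate: a diamond $\delta = (\delta_1, \delta_2)$ is exactly a pair of length-$2$ trajectories sharing both start and end states, so finite conservativeness of $R$ gives $\int_{\delta_1}R = \int_{\delta_2}R$, i.e.\ $(\curl R)(\delta) = 0$. For the converse, assume $\graph$ is diamond-complete and $\curl R = 0$, and fix two infinite trajectories $\sigma = (s_0, a_0, s_1, \ldots)$ and $\tau = (s_0, b_0, t_1, \ldots)$ from a common start state $s_0$; the goal is to show $\int_\sigma R = \int_\tau R$.

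My plan is an approximation argument. For each $k \geq 1$, I will construct infinite trajectories $\bar\sigma^k, \bar\tau^k$ from $s_0$ that (i) agree with $\sigma$ and $\tau$ respectively on their first $k$ transitions, and (ii) satisfy $\int_{\bar\sigma^k} R = \int_{\bar\tau^k} R$. Given such a construction, boundedness of $R$ by some constant $C$ together with the discount factor yields $\bigl|\int_\sigma R - \int_{\bar\sigma^k} R\bigr| \leq 2C\gamma^k/(1-\gamma)$ and analogously for $\tau$, so sending $k\to\infty$ forces $\int_\sigma R = \int_\tau R$. The bridges themselves are built by iteratively invoking diamond-completeness to produce length-$k$ paths $\rho_k^\sigma$ from $s_k$ and $\rho_k^\tau$ from $t_k$ to a common state $u_k$: the base case $k = 1$ is the diamond supplied directly by diamond-completeness applied to $(s_0, a_0, s_1)$ and $(s_0, b_0, t_1)$, and each inductive step invokes diamond-completeness several more times, first to extend the existing $\sigma$- and $\tau$-side bridges past $s_{k+1}$ and $t_{k+1}$ respectively, then to re-merge the two extensions at a new common state $u_{k+1}$. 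Having built the bridges, I pick any infinite trajectory $\phi$ out of $u_k$ (which exists by the no-dead-ends assumption) and set $\bar\sigma^k := \sigma_{\leq k} + \rho_k^\sigma + \phi$ and $\bar\tau^k := \tau_{\leq k} + \rho_k^\tau + \phi$; property (i) is then immediate.

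For (ii), observe that the length-$2k$ prefixes $\sigma_{\leq k} + \rho_k^\sigma$ and $\tau_{\leq k} + \rho_k^\tau$ of $\bar\sigma^k$ and $\bar\tau^k$ are two paths from $s_0$ to $u_k$, and every diamond produced during the bridging construction provides a local ``$2$-step swap'' that replaces one arm of the diamond by the other inside a longer trajectory; by $\curl R = 0$, each such swap preserves the line integral. A finite chain of these swaps converts the $\sigma$-prefix into the $\tau$-prefix, and since the tails $\phi$ are identical, the line integrals of $\bar\sigma^k$ and $\bar\tau^k$ agree. The main obstacle is the combinatorial bookkeeping: verifying that the diamonds produced in the $k$-th round of the bridging construction really do assemble into a valid chain of $2$-step swaps between the two prefixes without disturbing the overall endpoints. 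A careful double induction---outer on $k$, inner on the number of swaps---makes this precise, and once it is established the approximation estimate above finishes the proof.
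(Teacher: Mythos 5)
Your proof is correct, and it reaches the converse by a genuinely different route than the paper, even though both arguments are ultimately powered by the same combinatorial object: a lattice of diamonds, built by repeated application of diamond-completeness, spanning the region between $\sigma$ and $\tau$. The paper constructs a single \emph{infinite} triangular lattice widening away from $s_0$ and proves one global identity, $\int_\sigma R - \int_\tau R = \sum_{i}\gamma^{i-1}\sum_{k}(\curl R)(\delta_i^{(k)})$, by an explicit telescoping computation on the partial sums in which a boundary term of size $O(N\gamma^N)$ vanishes. You instead close each stage off into a finite rhombus: you re-merge the two paths at a common state $u_k$ at depth $2k$, obtain an \emph{exact} equality of the two bridged trajectories from local curl-zero two-step swaps (valid because each swap replaces one arm of a diamond by the other at a fixed time index, so the change in the line integral is $\gamma^t$ times a curl), and then let $k\to\infty$ using the $2C\gamma^k/(1-\gamma)$ tail bound. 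What the paper's version buys is a single self-contained identity with no limit of auxiliary trajectories; what yours buys is that every step before the final limit is a finite exact statement, and the limiting estimate is cleaner than the paper's boundary-term analysis. The price of your route is the deferred flip-connectivity lemma (that the left- and right-edge paths of the rhombus are joined by a finite chain of single-diamond swaps); this is true and standard for such planar diamond lattices, and is comparable in bookkeeping burden to the paper's telescoping computation, but you should be aware it is the part of your argument that still needs to be written out, together with the (routine) verification that the inductive merge really does produce a common state $u_k$ at depth exactly $2k$ with all intermediate transitions in $\transitions$.
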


\begin{table}
    \centering
    \caption{The analogy between Lagrangian mechanics and reinforcement learning.}
    \label{tab:physics_analogy}
    \begin{tabular}{ll}
        \toprule
        Physics & Reinforcement learning \\
        \midrule
        Configuration space & State space \\
        Evolution of the physical system & Optimal trajectory \\
        Lagrangian & Reward function \\
        Action & Returns \\
        Gauge transformation & Potential shaping \\
        \bottomrule
    \end{tabular}
\end{table}

For readers interested in physics, we note that our calculus framework suggests an analogy between potential shaping and gauge transformations
in Lagrangian mechanics, shown in \cref{tab:physics_analogy}. We describe this analogy
in more detail in \cref{sec:lagrange}, where we also show how it naturally leads to rederivations of generalizations
of potential shaping to time-dependent and action-dependent potentials~\citep{devlin2012,wiewiora2003}.

\section{Divergence-free reward canonicalization}\label{sec:gauge_fixing}

In this section we describe a natural way of removing the redundancy that potential
shaping creates in reward functions. In the language of \citet{gleave2021},
we define a new \emph{canonicalization}---a map that picks one reward function
from each potential shaping equivalence class---which is useful e.g.\ when
defining distance measures between reward functions~\citep{gleave2021,wulfe2022}.

Our approach will require two new differential operators, the divergence and the Laplacian.
To define these, we will need to extend our definition of transition graphs
to include \emph{weights}, which we shall do in the following subsection.

\subsection{Divergence and Laplacian}

In vector calculus, the divergence of a vector field ${\vec{v}}$ is usually defined explicitly, by giving the formula
$\div \vec{v} := \nabla \cdot \vec{v} = \partial_1 v_1 + \ldots + \partial_n v_n$.
In our case, however, it is not immediately
obvious what the discounted version should be, e.g.\ where (if at all) discount factors should
appear. 
To find the operator most analogous to divergence in vector calculus, we can define our divergence via its relationship to the already defined gradient operator.
In vector calculus, gradient and divergence are related via integration by parts:
\begin{equation}
    \int_{\R^n} (\nabla f(x)) \cdot \vec{v}(x) \diff x = -\int_{\R^n} f(x) (\nabla\cdot\vec{v})(x) \diff x\,.
\end{equation}
Formally, we say that the divergence is the \emph{negative linear adjoint} of the gradient, meaning that $\bracket{\grad f, \vec{v}} = \bracket{f, -\div \vec{v}}$,
where $\bracket{\cdot, \cdot}$ is the $L^2$ inner product. In this abstract formulation, the same relationship holds
in (undiscounted) calculus on graphs in addition to vector calculus. So we will follow this abstract definition
using our \emph{discounted} gradient.

To define the divergence as a linear adjoint, we need an
inner product on the space of potentials and reward functions. This in turn requires
weights for states and transitions, so we first extend our definition of transition graphs:

\begin{definition}
    A \emph{finite weighted transition graph} is a tuple $(\states, \actions, \transitions, \gamma, \stateweights, \weights)$
    such that $(\states, \actions, \transitions, \gamma)$ is a transition graph,
    $\states$ and $\actions$ are finite sets,
    $\stateweights: \states \to \R_{> 0}$ is a weight function for states,
    and $\weights: \transitions \to \R_{> 0}$ is a weight function for transitions.
\end{definition}
From now on, we assume all transition graphs are weighted.
For ease of exposition, we only consider \emph{finite} weighted transition graphs in the main text.
We describe an extension to infinite state and action spaces in \cref{sec:infinite} that is more technically involved, but conceptually similar.

On the space of potentials $\scalars$, we define the inner product
\begin{equation}
    \bracket{\potential, \Psi} = \sum_{s \in \states} \stateweights(s)\potential(s) \Psi(s)\,,
\end{equation}
and on the space of rewards $\vectors$, we define
\begin{equation}\label{eq:inner_product_edges}
    \bracket{R, Q} = \sum_{t \in \transitions} \weights(t) R(t) Q(t)\,.
\end{equation}
Of course, we could have defined these inner products without weights by simply using unweighted sums,
implicitly fixing uniform weights. But while this is arguably somewhat canonical
in the discrete case, it fails entirely in the continuous setting (covered in \cref{sec:infinite}),
where the distribution defined by a uniform density would depend heavily on the parameterization
of the state and action space. We thus do not want to \enquote{hide} the fact that we need to choose
weights.

Note that we require weights $\weights(t)$ to be strictly positive.
If we want to exclude an edge, we remove it from the transition graph instead of assigning 0 weight.
This ensures $\bracket{R,Q}$ is positive-definite, a requirement to be an inner product.

We are now ready to define the divergence operator.
\begin{definition}
    The \emph{divergence} is the negative of the adjoint of the gradient, i.e.\ the
    unique linear map $\div: \vectors \to \scalars$ satisfying
    \begin{equation}
        \label{eq:divergence_adjoint}
        \bracket{R, \grad \potential} = -\bracket{\div R, \potential}
    \end{equation}
    for any reward function $R$ and potential $\potential$.
\end{definition}

While this abstract definition is often easiest to work with, we do need a concrete
formula whenever we actually want to compute the divergence:
\begin{restatable}{proposition}{explicitDivergence}
    The divergence is given explicitly by
    \begin{equation}\label{eq:divergence_explicit}
        (\div R)(s) = \frac{1}{\stateweights(s)}\left(\smashoperator[r]{\sum_{t \in \transitions_{\text{out}}(s)}} \weights(t) R(t)
        - \gamma\sum_{\mathclap{t \in \transitions_{\text{in}}(s)}} \weights(t) R(t)\right)\,,
    \end{equation}
    where $\transitions_{\text{out}}(s) \subseteq \transitions$ is the set of outgoing transitions from state $s$
    and $\transitions_{\text{in}}(s)$ the set of incoming transitions into $s$.
\end{restatable}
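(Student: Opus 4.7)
The plan is to derive the formula directly from the defining adjoint relation $\bracket{R, \grad \potential} = -\bracket{\div R, \potential}$ by expanding the left-hand side using the explicit definitions of the gradient and the inner product on $\vectors$, then reindexing the resulting sums by grouping transitions according to their source or target state.

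First I would expand $\bracket{R, \grad \potential}$ using \cref{eq:inner_product_edges} and \cref{eq:grad_transition}, obtaining
\begin{equation*}
    \bracket{R, \grad \potential} = \sum_{(s,a,s') \in \transitions} \weights(s,a,s')\, R(s,a,s') \bigl[\gamma\, \potential(s') - \potential(s)\bigr].
\end{equation*}
I would then split this into two sums and swap the order of summation in each. For the term containing $\potential(s')$, I would group transitions by their endpoint $s'$, replacing the double sum over $\transitions$ by an outer sum over states and an inner sum over $\transitions_{\text{in}}(s')$. Similarly, for the term containing $\potential(s)$, I would group by the starting state and sum over $\transitions_{\text{out}}(s)$. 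After renaming the dummy index $s'$ to $s$ in the first term, this yields
\begin{equation*}
    \bracket{R, \grad \potential} = \sum_{s \in \states} \potential(s) \left[\gamma \smashoperator{\sum_{t \in \transitions_{\text{in}}(s)}} \weights(t) R(t) - \smashoperator{\sum_{t \in \transitions_{\text{out}}(s)}} \weights(t) R(t)\right].
\end{equation*}

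Next I would compare with $-\bracket{\div R, \potential} = -\sum_{s} \stateweights(s) (\div R)(s) \potential(s)$. Since the adjoint identity must hold for every potential $\potential$ and the state space is finite, I can test against the indicator functions $\potential = \mathbf{1}_{\{s\}}$ for each $s \in \states$ to extract a pointwise equation. Rearranging and dividing by $\stateweights(s) > 0$ then yields \cref{eq:divergence_explicit}.

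There is no real obstacle here: the argument is a routine adjoint computation, and the only mild subtlety is keeping the bookkeeping of the reindexing and signs correct. The strict positivity of $\stateweights$ (guaranteed by the definition of a finite weighted transition graph) ensures the division is legitimate, and finiteness of $\states$ and $\transitions$ makes all sum manipulations trivially valid. Existence and uniqueness of $\div R$ as a linear adjoint are automatic from the fact that $\bracket{\cdot,\cdot}$ is a genuine inner product on the finite-dimensional space $\scalars$; the derivation above simultaneously furnishes the formula and confirms the adjoint is well-defined.
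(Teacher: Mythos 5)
Your proposal is correct and is essentially the same argument as the paper's: the paper also derives the formula from the adjoint identity by testing against the indicator function $\delta_s$ of a single state, expanding $\bracket{R, \grad \delta_s}$ via \cref{eq:inner_product_edges} and \cref{eq:grad_transition}, and collecting the surviving terms into the sums over $\transitions_{\text{in}}(s)$ and $\transitions_{\text{out}}(s)$. The only cosmetic difference is that you reindex the double sum for a general potential before specializing to indicators, whereas the paper specializes first.
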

We can read this as the outgoing \enquote{reward flow} minus the
discounted incoming \enquote{reward flow}.

Given that we have now defined a gradient and a divergence operator, we also get
a Laplacian \enquote{for free}:
\begin{definition}
    The \emph{Laplacian} on a transition graph is the map
    $\Laplace: \scalars \to \scalars$ defined by $\Laplace := \div \circ \grad$.
\end{definition}

For $\gamma = 1$, all our definitions reduce to the usual ones
used in calculus on graphs. In particular, the Laplacian we just defined
is a generalization of the well-known graph Laplacian.
But in contrast to the usual graph Laplacian, the discounted Laplacian
is often invertible---a fact we will need later:
\begin{restatable}{lemma}{laplacianBijective}\label{thm:laplacian_bijective}
    Let $\graph$ be a finite weighted transition graph with discount factor $\gamma < 1$ and such that
    every state is part of some loop. Then the Laplace operator $\Laplace$ on $\graph$
    is bijective.
\end{restatable}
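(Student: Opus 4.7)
The plan is to prove injectivity of $\Laplace$ and then conclude bijectivity from finite-dimensionality, since $\scalars \cong \R^{\abs{\states}}$ means $\Laplace$ is a linear endomorphism of a finite-dimensional space, where injective and bijective coincide.

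For injectivity, I would exploit the adjoint definition of $\div$ directly. Suppose $\Laplace \potential = \div \grad \potential = 0$. Taking the inner product with $\potential$ and applying \cref{eq:divergence_adjoint} with $R := \grad \potential$ gives
\begin{equation}
    0 = -\bracket{\div \grad \potential, \potential} = \bracket{\grad \potential, \grad \potential}\,.
\end{equation}
Since $\bracket{\cdot, \cdot}$ on $\vectors$ is positive-definite (weights $\weights(t)$ are strictly positive), this forces $\grad \potential = 0$, i.e.\ $\gamma \potential(s') = \potential(s)$ for every allowed transition $(s, a, s') \in \transitions$.

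The substantive step is then to use the loop hypothesis together with $\gamma < 1$ to upgrade $\grad \potential = 0$ to $\potential = 0$. Given any state $s$, pick a loop $s = s_0 \to s_1 \to \cdots \to s_k = s$ through $\transitions$ with $k \geq 1$. Iterating $\potential(s_i) = \gamma \potential(s_{i+1})$ along the loop yields $\potential(s) = \gamma^k \potential(s)$, so $(1 - \gamma^k)\potential(s) = 0$; since $\gamma^k < 1$, we get $\potential(s) = 0$. As $s$ was arbitrary, $\potential \equiv 0$, establishing injectivity and thus bijectivity.

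The only place the proof could go wrong is if the loop assumption were dropped: a state sitting on an infinite acyclic chain could carry a nonzero harmonic $\potential$ via $\potential(s) = \gamma \potential(s')$ without ever being forced back to itself. So the main conceptual point is that $\gamma < 1$ turns every cycle into a strict contraction, and the loop hypothesis is precisely what guarantees every state feels some cycle. No further machinery (spectrum of $\Laplace$, explicit matrix form, etc.) is needed.
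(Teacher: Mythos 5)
Your proof is correct and follows essentially the same route as the paper's: reduce bijectivity to injectivity by finite-dimensionality, show $\ker(\Laplace) = \ker(\grad)$, and then use the loop hypothesis with $\gamma^k < 1$ to force $\potential = 0$ (the paper's final step is word-for-word your argument). The only cosmetic difference is in the middle: the paper cites \cref{thm:hodge} to get $\ker(\div) \cap \im(\grad) = \{0\}$, whereas you inline the underlying computation $0 = -\bracket{\Laplace\potential,\potential} = \bracket{\grad\potential,\grad\potential}$ directly from the adjoint definition and positive-definiteness, which is slightly more self-contained but not a different idea.
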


\subsection{Divergence-free reward functions}
Recall that if two reward functions $R$ and $R'$ are related by potential shaping,
i.e.\ $R' = R + \grad \potential$ for some potential $\potential$, then they are
\emph{equivalent} in a strong sense (as discussed in \cref{sec:optimality-preserving}).
It is easy to check that being related by potential shaping is indeed an
equivalence relation.

In many cases, like finding the optimal policy, we do not care what the precise reward function is.
The only thing that matters is its \emph{equivalence class}, i.e.\ the set
$\setcomp{R + \grad \potential}{\potential \in \scalars}$ of reward functions
that can be obtained by potential shaping. But working with an entire set of reward functions
is unwieldy, so what we would like is to specify a single representative from
each equivalence class. One example is measuring the distance between reward functions:
we might need a distance measure that assigns zero distance to reward functions
related by potential shaping, and an easy way to get that is to first transform
each reward function into a \enquote{canonical representative} of its equivalence
class, and then use any distance measure between these canonicalized rewards~\citep{gleave2021}.

The key fact behind the canonicalization we will introduce is that the space of reward functions
can be decomposed into divergence-free and gradient subspaces: 
\begin{restatable}{theorem}{hodge}\label{thm:hodge}
    The space of reward functions on a transition graph is the orthogonal direct sum
    of the space of divergence-free rewards and the space of potential shapings,
    \begin{equation}
        \vectors = \ker(\div) \oplus \im(\grad)\,.
    \end{equation}
\end{restatable}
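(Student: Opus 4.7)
The plan is to reduce the claim to the standard finite-dimensional orthogonal decomposition $W = \im(T) \oplus \ker(T^{*})$ for any linear map $T: V \to W$ between finite-dimensional inner product spaces, applied to the gradient operator $T = \grad: \scalars \to \vectors$. Since we are working with a finite weighted transition graph, $\scalars$ and $\vectors$ are finite-dimensional Euclidean spaces equipped with the inner products defined earlier in this section, so the standard theory applies without functional-analytic complications.

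First I would identify the adjoint of $\grad$: the defining equation~\eqref{eq:divergence_adjoint} of the divergence says exactly that $\grad^{*} = -\div$ with respect to these inner products. Next I would verify orthogonality directly: for any $\potential \in \scalars$ and any $R \in \ker(\div)$,
\begin{equation}
    \bracket{\grad \potential, R} = -\bracket{\potential, \div R} = 0,
\end{equation}
so $\im(\grad) \perp \ker(\div)$.

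To show that the two subspaces jointly span $\vectors$, I would use the elementary fact that $W = U \oplus U^{\perp}$ for any subspace $U$ of a finite-dimensional inner product space $W$. Applied with $U = \im(\grad)$, it suffices to check that $\im(\grad)^{\perp} = \ker(\div)$. This follows from a short chain of equivalences: $R \in \im(\grad)^{\perp}$ iff $\bracket{R, \grad \potential} = 0$ for every $\potential \in \scalars$, iff $\bracket{\div R, \potential} = 0$ for every $\potential \in \scalars$, iff $\div R = 0$, using the adjoint identity in the middle step and non-degeneracy of the inner product on $\scalars$ in the last.

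The main obstacle is essentially bookkeeping rather than a deep issue: once $-\div$ is recognized as the adjoint of $\grad$, everything reduces to standard linear algebra, and we never even need to invoke \cref{thm:laplacian_bijective}. I would expect the more delicate version of this theorem to be its generalization to infinite state and action spaces (deferred to \cref{sec:infinite}), where one would have to verify that $\im(\grad)$ is closed in an appropriate topology before the orthogonal-complement argument applies; the finite-dimensional setting of the main text sidesteps that issue entirely.
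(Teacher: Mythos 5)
Your proposal is correct and follows essentially the same route as the paper's proof: identify $-\div$ as the adjoint of $\grad$, show via non-degeneracy of the inner product on $\scalars$ that $\im(\grad)^{\perp} = \ker(\div)$, and invoke the finite-dimensional fact that a subspace and its orthogonal complement give an orthogonal direct sum. Your closing remark about needing $\im(\grad)$ to be closed in the infinite-dimensional case also matches what the paper addresses in its appendix on infinite weighted transition graphs.
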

Here, $\im(\grad) = \setcomp{\grad \potential}{\potential \in \scalars}$ is the image
of the $\grad$ operator and $\ker(\div) = \setcomp{R \in \vectors}{\div R = 0}$ is the kernel of the $\div$ operator.
Intuitively, \cref{thm:hodge} means that every reward function $R$ can be uniquely
written as the sum of a divergence-free reward function and the gradient of some potential.
Additionally, any divergence-free reward function is orthogonal
to any gradient, under the inner product on $\vectors$ we defined above.

We can state the decomposition more explicitly as follows, and also prove
a stronger result in cases where the Laplacian is invertible (as discussed in \cref{thm:laplacian_bijective}):
\begin{restatable}{corollary}{rewardDecomposition}\label{thm:reward_decomposition}
    For any reward function $R$,
    there is a potential $\potential$ and a unique reward function $R'$ such that
    \begin{equation}\label{eq:reward_decomposition}
        R = R' + \grad \potential
    \end{equation}
    and $\div R' = 0$.

    If furthermore $\gamma < 1$ and every state is part of a loop,
    then $\potential$ is also unique and given by $\potential = \Laplace^{-1}(\div R)$.
\end{restatable}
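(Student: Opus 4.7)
The plan is to derive this corollary as a direct consequence of the Hodge-style decomposition \cref{thm:hodge} together with the Laplacian invertibility result \cref{thm:laplacian_bijective}. The work splits neatly into two parts, matching the two sentences of the statement.

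For the first part, I would simply instantiate \cref{thm:hodge}: since $\vectors = \ker(\div) \oplus \im(\grad)$, any reward function $R$ decomposes as $R = R' + \grad \potential$ for some $R' \in \ker(\div)$ and some $\potential \in \scalars$. Existence is immediate. For uniqueness of $R'$, suppose $R = R' + \grad \potential = R'' + \grad \Psi$ with $R', R'' \in \ker(\div)$. Then $R' - R'' = \grad(\Psi - \potential)$ lies in $\ker(\div) \cap \im(\grad)$, which is trivial because the sum is direct (equivalently, it is orthogonal: $\bracket{R' - R'', R' - R''} = \bracket{R' - R'', \grad(\Psi - \potential)} = -\bracket{\div(R' - R''), \Psi - \potential} = 0$). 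Hence $R' = R''$. Note that $\potential$ itself need not be unique in general, since anything in $\ker(\grad)$ may be added to it.

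For the second part, assume $\gamma < 1$ and that every state is part of a loop, so that \cref{thm:laplacian_bijective} applies and $\Laplace$ is bijective. Applying $\div$ to both sides of $R = R' + \grad \potential$ gives
\begin{equation}
    \div R = \div R' + \div \grad \potential = 0 + \Laplace \potential = \Laplace \potential,
\end{equation}
using $\div R' = 0$ and the definition $\Laplace = \div \circ \grad$. Since $\Laplace$ is bijective, this uniquely determines $\potential = \Laplace^{-1}(\div R)$. Uniqueness of $\potential$ then forces uniqueness of $R' = R - \grad \potential$ as well, consistent with the first part.

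There is no significant obstacle here: the substantive content lives entirely in \cref{thm:hodge} and \cref{thm:laplacian_bijective}. The only thing to be careful about is that in the first part one must not over-claim uniqueness of $\potential$, since without the loop assumption the kernel of $\grad$ may be nontrivial (e.g., constant potentials when $\gamma = 1$), whereas in the second part the invertibility of $\Laplace$ rules this out.
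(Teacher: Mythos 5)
Your proposal is correct and follows essentially the same route as the paper: the first part is read off from the orthogonal direct sum in \cref{thm:hodge}, and the second part combines \cref{thm:laplacian_bijective} with applying $\div$ to the decomposition to obtain $\potential = \Laplace^{-1}(\div R)$. The only cosmetic difference is that the paper deduces uniqueness of $\potential$ from injectivity of $\grad$ (implied by bijectivity of $\Laplace$) rather than from unique solvability of $\Laplace \potential = \div R$, which is an equivalent observation.
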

Note that even if $\potential$ is not unique, $\grad \potential$ certainly will be
(since $R'$ is unique and $\grad \potential = R - R'$). As a simple example
of why $\potential$ might not be unique, if $\gamma = 1$, then adding a constant
to $\potential$ doesn't affect its gradient.

As a sidenote, \cref{thm:hodge} is similar to the \emph{Helmholtz decomposition} for
vector fields, which roughly states that any vector field on $\R^3$ can be decomposed
into a divergence-free component and a gradient. A generalization of the Helmholtz
decomposition, called the \emph{Hodge decomposition} also holds in higher dimensions
and in particular on graphs~\cite{lim2020}. \Cref{thm:hodge} is the generalization
of this Hodge decomposition to discounted differential operators.

The decomposition of reward functions lets us specify representatives for each potential shaping
equivalence class:
\begin{restatable}{corollary}{divergenceFreeRepresentative}\label{thm:divergence_free_representative}
    Every potential shaping equivalence class of reward functions has exactly one
    divergence-free representative.
\end{restatable}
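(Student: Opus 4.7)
The plan is to derive the corollary almost entirely from \cref{thm:hodge} (and its immediate consequence \cref{thm:reward_decomposition}), splitting the claim into existence and uniqueness of a divergence-free representative.

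For existence, I would apply \cref{thm:reward_decomposition} to any given reward function $R$ in the equivalence class: this yields a decomposition $R = R' + \grad \potential$ with $\div R' = 0$. Rewriting this as $R' = R + \grad(-\potential)$ shows that $R'$ is obtained from $R$ by potential shaping with potential $-\potential$, so $R'$ lies in the same equivalence class as $R$ and is divergence-free.

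For uniqueness, suppose $R_1'$ and $R_2'$ are both divergence-free and both lie in the equivalence class of $R$. Then $R_1' - R_2'$ is a difference of two potential shapings of $R$, hence itself a gradient $\grad \potential$ for some potential $\potential$, so $R_1' - R_2' \in \im(\grad)$. On the other hand, since divergence is linear and both $R_1'$ and $R_2'$ are divergence-free, $\div(R_1' - R_2') = 0$, so $R_1' - R_2' \in \ker(\div)$. By \cref{thm:hodge}, the decomposition $\vectors = \ker(\div) \oplus \im(\grad)$ is a direct sum, so $\ker(\div) \cap \im(\grad) = \{0\}$, forcing $R_1' = R_2'$.

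There is no real obstacle here: the content of the corollary is already packaged in \cref{thm:hodge}, which supplies both the existence of the decomposition and the triviality of the intersection $\ker(\div) \cap \im(\grad)$. The only care needed is to notice that the relevant uniqueness statement for the divergence-free \emph{part} of the decomposition (which is exactly what we want) holds unconditionally, even though uniqueness of the potential $\potential$ in \cref{thm:reward_decomposition} requires the additional hypotheses of \cref{thm:laplacian_bijective}. I would explicitly flag this distinction in the written proof to avoid confusion with the caveats around $\potential$.
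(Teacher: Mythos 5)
Your proposal is correct and follows essentially the same route as the paper: existence comes from the decomposition in \cref{thm:reward_decomposition}, and uniqueness from the direct-sum structure of \cref{thm:hodge} (the paper phrases this via uniqueness of the decomposition of a divergence-free $R$, you via the trivial intersection $\ker(\div)\cap\im(\grad)=\{0\}$, which is the same fact). Your explicit remark that uniqueness of the divergence-free part holds unconditionally, unlike uniqueness of $\potential$, is a worthwhile clarification that the paper leaves implicit.
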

We write $C(R)$ for the divergence-free representative of the potential shaping
class of $R$, i.e.\ $C(R) := R'$ in \cref{eq:reward_decomposition}.
By construction, $C(R)$ is invariant under potential shaping of $R$,
i.e.\ $C(R + \grad \potential) = C(R)$.
This defines a specific representative from each
equivalence class, analogous to gauge fixing in physics.
In \cref{sec:more_results}, we show that divergence-free reward functions can also be uniquely
characterized as the reward functions with minimal $L^2$ norm within a potential shaping equivalence class.

As mentioned above, such canonicalization functions can be used to define distance measures
on reward functions that are invariant under potential shaping. \Citet{gleave2021} first introduced
this idea, but their canonicalization evaluates the reward function even on transitions known to
be impossible, which can lead to unreasonably high distances. \Citet{wulfe2022} attempt to fix
this issue, but their canonicalization still uses off-distribution transitions, and also changes
the potential shaping class of rewards. Our divergence-free canonicalization only relies on transitions that are part
of the transition graph, and never changes the potential shaping equivalence class.

\section{Conclusion}
We have introduced a framework for calculus that includes discount factors and used
this framework to reformulate results related to potential shaping in a calculus language.
This framework also allowed us to weaken the assumptions needed to show that potential shaping
is the only additive reward transformation that preserves optimal policies.
The calculus view on potential shaping furthermore suggested an analogy to physics, which we used
to naturally motivate existing generalization of potential shaping to time- and action-dependent potentials.
Finally, we defined the divergence of reward functions and showed that divergence-free reward
functions are a natural way to choose representatives from each potential shaping equivalence class.

Future work could attempt to either show that potential shaping is still the only additive
optimality-preserving shaping under even weaker assumptions than ours, or to show that further such
generalizations are impossible---this would more precisely characterize the situations under which
there are more transformations available than only potential shaping.
We also limit ourselves to applying our framework of calculus with discounting specifically
to concepts related to potential shaping. While this is the most obvious application,
given that potential shaping is the inspiration for our definition of the gradient in the
first place, it is possible that some ideas could be used in other contexts as well.

\section*{Acknowledgements}
We would like to thank Elio Farina and the Berkeley Existential Risk Initiative for helping
to create the figures in this paper. We would also like to thank Pablo Moreno, Euan McLean,
and FAR AI, as well as Daniel Filan, Scott Emmons, and Stuart Russell
for feedback on a draft of this paper and for related discussions.

\bibliography{references}

\clearpage

\appendix

\section{Finitely conservative reward functions}\label{sec:finitely_conservative}
In the main paper, we focused on the notion of conservative reward functions,
i.e.\ $\int_\sigma R = \int_\tau R$ for any \emph{infinite} trajectories $\sigma$
and $\tau$. But recall we also presented an alternative definition for \emph{finite} trajectories:
\finitelyConservativeDef*
Under the infinite-horizon assumptions we make (specifically that every state has an outgoing
transition), finite conservativeness is a weaker notion than (infinite) conservativeness:
\finitelyConservative*
\begin{proof}
    Let $\sigma$ and $\tau$ be finite trajectories with the same initial state, terminal state $s_T$,
    and length $T$. Then we can find an infinite trajectory $\zeta$ starting from $s_T$ (since every state
    has an outgoing transition). Let $\sigma \cdot \zeta$ be the concatenation of $\sigma$ with $\zeta$
    and analogously for $\tau \cdot \zeta$. We have
    \begin{equation}
        \int_\sigma R + \gamma^T \int_\zeta R = \int_{\sigma \cdot \zeta} R = \int_{\tau \cdot \zeta} R = \int_\tau R + \gamma^T \int_\zeta R\,.
    \end{equation}
    For the middle equality, we used that $R$ is conservative. Cancelling the $\gamma^T \int_\zeta R$ term on both sides,
    we then have $\int_\sigma R = \int_\tau R$, so $R$ is finitely conservative as claimed.
\end{proof}
Note that the converse implication generally doesn't hold. For example consider a transition graph with $\states = \Z$
and with a transition $s \to s'$ if and only if $\abs{s'} = \abs{s} + 1$. Starting from 0, there are two infinite trajectories,
one going in the positive direction and one going in the negative direction. For most reward functions, these will
of course have different returns, so most reward functions will not be conservative. However, given any two states
in this transition graph, there is at most a single trajectory connecting them. So any two finite trajectories
with the same initial and terminal states are in fact equal. Thus, they also have equal returns under any reward function,
so \emph{all} reward functions are finitely conservative.
This example illustrates that finite conservativeness can be an unhelpful condition from the perspective of reinforcement learning:
if the two infinite trajectories have different returns, then clearly the reward function is not optimality-preserving.

Still, under certain topological assumptions, finite conservativeness is well-behaved, and in fact equivalent to
conservativeness. First, just like conservativeness is equivalent to \emph{bounded} potential shaping,
finite conservativness is often equivalent to (possibly unbounded) potential shaping:
\begin{proposition}\label{thm:finitely_conservative_potential}
    Let $G$ be a transition graph with $0 < \gamma < 1$ and $R$ a finitely conservative reward function on $G$.
    If there is a state $s_0$ in $G$ from which any other state can be reached, and $s_0$ has a self-loop,
    then $R$ is the gradient of a potential.
\end{proposition}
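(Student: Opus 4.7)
The plan is to construct the potential $\Phi$ explicitly using line integrals from the reference state $s_0$, using the self-loop at $s_0$ to handle the fact that trajectories from $s_0$ to a given state $s$ may have different lengths.

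First, I would nail down what $\Phi(s_0)$ must be. If $R = \grad \Phi$ and $(s_0, a_0, s_0)$ is the self-loop with reward $r_0 := R(s_0, a_0, s_0)$, then $r_0 = \gamma \Phi(s_0) - \Phi(s_0)$, which forces $\Phi(s_0) = -r_0/(1-\gamma)$. I take this as the definition. Then for any other state $s$, reachability gives a finite trajectory $\tau = (s_0, \ldots, s_T = s)$, and \cref{thm:fundamental}\ref{item:gradient_is_conservative} dictates the value $\Phi(s) := \gamma^{-T}\left(\int_\tau R - \frac{r_0}{1-\gamma}\right)$. Note $\gamma^{-T}$ makes sense because $\gamma > 0$.

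The main obstacle is well-definedness: two trajectories $\tau_1, \tau_2$ from $s_0$ to $s$ may have different lengths $T_1 \le T_2$. Here the self-loop is crucial. I would prepend the self-loop $T_2 - T_1$ times to $\tau_1$ to get a trajectory $\tilde\tau_1$ of length $T_2$ with the same endpoints as $\tau_2$. A direct computation gives
\begin{equation}
    \int_{\tilde\tau_1} R = r_0 \, \frac{1 - \gamma^{T_2 - T_1}}{1 - \gamma} + \gamma^{T_2 - T_1} \int_{\tau_1} R.
\end{equation}
Finite conservativeness then equates $\int_{\tilde\tau_1} R$ with $\int_{\tau_2} R$, and substituting into the defining formula for $\Phi(s)$ shows the $T_2$-version collapses back to the $T_1$-version; the $r_0/(1-\gamma)$ shift is precisely what makes the cancellation work. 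This is the step I expect to require the most care in bookkeeping.

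Finally, to verify $R = \grad \Phi$, I take an arbitrary transition $(s, a, s') \in \transitions$, choose a trajectory $\tau$ from $s_0$ to $s$ of length $T$, and form $\tau' := \tau \cdot (s, a, s')$ of length $T+1$ from $s_0$ to $s'$. Then $\int_{\tau'} R = \int_\tau R + \gamma^T R(s, a, s')$, and a short algebraic manipulation using the definitions of $\Phi(s)$ and $\Phi(s')$ yields $\gamma \Phi(s') - \Phi(s) = R(s, a, s')$, which is exactly the statement $R = \grad \Phi$. The case $s = s_0$ and the self-loop itself are consistent with the choice of $\Phi(s_0)$ by construction.
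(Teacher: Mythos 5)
Your proposal is correct and takes essentially the same approach as the paper: an explicit potential built from discounted line integrals out of $s_0$, normalized by $-R(s_0,s_0)/(1-\gamma)$, with the self-loop used to pad trajectories to equal length so that finite conservativeness applies. The only difference is organizational---you establish well-definedness for arbitrary trajectories up front and then verify the gradient identity directly, whereas the paper fixes minimal-length trajectories and does the self-loop padding inside the verification step---but the key computations are identical.
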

\begin{proof}
    We will explicitly construct a potential $\potential$ and show that $R$ is its gradient.
    First note that $R$ is action independent, i.e.\ $R(s, a, s') = R(s, a', s')$ for all transitions $(s, a, s'), (s, a', s')$.
    To see why, interpret $(s, a, s')$ and $(s, a', s')$ as length-one trajectories with the same initial and final state;
    their returns are equal by finite conservativness, and these returns are precisely $R(s, a, s')$ and $R(s, a', s')$.
    We will therefore simply write $R(s, s')$ for the remainder of this proof.

    For any state $s$ in $G$, let $\tau_s$ be a trajectory from $s_0$ to $s$ with minimal length
    (there may be multiple minimal length trajectories, if so, pick any one of them).
    Then set
    \begin{equation}
        \potential(s) := \frac{1}{\gamma^{\abs{\tau_s}}} \left(\int_{\tau_s} R + \frac{R(s_0, s_0)}{\gamma - 1}\right)\,.
    \end{equation}
    The line integral is the same for all minimal trajectories from $s_0$ to $s$
    (since they all have the same start point, end point, and length), so this is independent of the specific choice of $\tau_s$.
    
    Note that for $s = s_0$, the shortest path is the empty one, so the integral vanishes and $\abs{\tau_s} = 0$.
    Thus, $\potential(s_0) = \frac{R(s_0, s_0)}{\gamma - 1}$, and we can also write
    \begin{equation}
        \potential(s) = \frac{1}{\gamma^{|\tau_s|}} \left(\int_{\tau_s} R + \potential(s_0)\right)\,.
    \end{equation}
    
    As an aside, let us briefly motivate this expression for the potential: the expression for $\potential(s_0)$
    can be derived immediately from the condition $(\operatorname{grad} \potential)(s_0, s_0) = \gamma \potential(s_0) - \potential(s_0) = R(s_0, s_0)$.
    Then if $R$ is indeed the gradient of a potential $\potential$, we know that
    $\int_{\tau_s} R = \gamma^{\abs{\tau_s}} \potential(s) - \potential(s_0)$,
    from which we can derive the expression above for $\potential(s)$.
    Of course this is just motivation for why this expression is the right one assuming that a potential exists at all.
    What we need to do in the remainder of this proof is show that the gradient of this potential is indeed $R$.
    
    Pick any transition $s \to s'$ in $G$. We need to show that $\gamma \potential(s') - \potential(s) = R(s, s')$.
    First, define the trajectory $\sigma$ as $\tau_s$ followed by $s \to s'$.
    This is a trajectory from $s_0$ to $s'$ of length $\abs{\tau_s} + 1$. We can decompose the line integral of $R$ along $\sigma$ as
    \begin{equation}
        \int_\sigma R = \int_{\tau_s} R + \gamma^{\abs{\tau_s}} R(s, s')\,.
    \end{equation}
    Because $R$ is conservative, we know that $\int_{\sigma} R = \int_{\sigma'} R$ for any other trajectory $\sigma'$
    from $s_0$ to $s'$ with the same length as $\sigma$. $\tau_{s'}$ is a trajectory from $s_0$ to $s'$ but might have a different length.
    Note though that $\tau_{s'}$ can never be longer than $\sigma$, since $\tau_{s'}$ was defined as a \emph{shortest} trajectory
    from $s_0$ to $s'$. Recall also that there is a self-loop $s_0 \to s_0$. Thus, we can make $\tau_{s'}$ the right length
    by first staying in $s_0$ for some number of steps. Specifically, let $\sigma'$ be the trajectory that takes a transition $s_0 \to s_0$
    (with any action) for $\abs{\sigma} - \abs{\tau_{s'}}$ steps and then follows $\tau_{s'}$.
    We then know that $\int_\sigma R = \int_{\sigma'} R$.
    
    Similar to $\int_\sigma R$ above, we can decompose $\int_{\sigma'} R$. Write $n := \abs{\sigma} - \abs{\tau_{s'}}$
    for the number of steps we stay in $s_0$, then we have
    \begin{equation}
        \int_{\sigma'} R = \sum_{t = 0}^{n - 1} \gamma^t R(s_0, s_0) + \gamma^n \int_{\tau_{s'}} R\,.
    \end{equation}
    (If $\abs{\sigma} = \abs{\tau_{s'}}$, i.e.\ $n = 0$, then the negative upper bound on the sum means that the sum is empty).
    We can write the geometric sum explicitly as $\sum_{t = 0}^{n - 1} \gamma^t = \frac{\gamma^n - 1}{\gamma - 1}$.
    This lets us rewrite the integral as
    \begin{equation}
        \begin{split}
            \int_{\sigma'} R &= \frac{\gamma^n - 1}{\gamma - 1} R(s_0, s_0) + \gamma^n \int_{\tau_{s'}} R \\
            &= (\gamma^n - 1) \potential(s_0) + \gamma^n \int_{\tau_{s'}} R\,.
        \end{split}
    \end{equation}
    Using the fact that $\int_\sigma R = \int_{\sigma'} R$, we thus get
    \begin{equation}
        \int_{\tau_s} R + \gamma^{\abs{\tau_s}} R(s, s') = (\gamma^n - 1)\potential(s_0) + \gamma^n \int_{\tau_{s'}} R\,.
    \end{equation}
    Rearranging to have $R(s, s')$ on one side, we get
    \begin{equation}
        \begin{split}
        R(s, s') &= \gamma^{n - \abs{\tau_s}} \potential(s_0) - \gamma^{-\abs{\tau_s}} \potential(s_0) \\
        &+ \gamma^{n - \abs{\tau_s}} \int_{\tau_{s'}} R - \gamma^{-\abs{\tau_s}} \int_{\tau_s} R\,.
        \end{split}
    \end{equation}
    Now recall that we defined $n := \abs{\sigma} - \abs{\tau_{s'}}$ and that furthermore $\abs{\sigma} = \abs{\tau_s} + 1$.
    Thus, $n - \abs{\tau_s} = 1 - \abs{\tau_{s'}}$. With some more rearranging on the RHS, that gives us
    \begin{equation}
        \begin{split}
            R(s, s') &= \gamma^{1 - \abs{\tau_{s'}}}\left(\int_{\tau_{s'}} R + \potential(s_0)\right) \\
            &- \gamma^{-\abs{\tau_s}} \left(\int_{\tau_s} R + \potential(s_0)\right)\,.
        \end{split}
    \end{equation}
    Note that the RHS is precisely $\gamma \potential(s') - \potential(s)$, i.e.\ the gradient of $\potential$ evaluated at $(s, s')$.
    That concludes our proof.
\end{proof}

Note that $R$ being the gradient of a potential is \emph{not} sufficient for $R$ to be optimality-preserving.
If the potential is \emph{unbounded}, then the rest term $\gamma^T \potential(s_T)$ in the returns might not
converge to zero as $T \to \infty$. The transition graph on the integers above gives an example for this: any reward function
is the gradient of a potential, but the potential might be unbounded. Just like finite conservativeness is
often too weak a condition from an RL perspective, so is potential shaping. Instead, a more well-behaved
condition is being the gradient of a \emph{bounded} potential, like we focused on in the main paper, e.g.\ \cref{fig:overview}.
Getting there from finite conservativeness requires additional assumptions:
\begin{proposition}
    If in addition to all the assumptions from \cref{thm:finitely_conservative_potential},
    there is a constant $N \in \N$ such that any state can be reached from $s_0$ in $\leq N$ steps,
    then $R$ is the gradient of a \emph{bounded} potential.
\end{proposition}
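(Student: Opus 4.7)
The plan is to reuse the explicit potential $\potential$ constructed in the proof of \cref{thm:finitely_conservative_potential} and observe that the new uniform-depth hypothesis is exactly what is needed to bound it. Recall that proof chose, for each state $s$, a shortest trajectory $\tau_s$ from $s_0$ to $s$ and defined
\[
\potential(s) = \frac{1}{\gamma^{\abs{\tau_s}}}\left(\int_{\tau_s} R + \potential(s_0)\right),
\]
and already established that $R = \grad \potential$. So I would not need to reverify that this $\potential$ is a potential for $R$; the only remaining task is to show it is uniformly bounded in $s$.

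Next I would bound the three pieces in the formula separately. The new hypothesis gives $\abs{\tau_s} \leq N$ for every $s \in \states$, hence $\gamma^{-\abs{\tau_s}} \leq \gamma^{-N}$, a finite constant since $0 < \gamma < 1$. Since reward functions are assumed bounded throughout \cref{sec:calculus}, pick $M$ with $\abs{R(t)} \leq M$ for every $t \in \transitions$; then the line integral along any $\tau_s$ satisfies
\[
\Bigl|\int_{\tau_s} R\Bigr| \leq \sum_{t=0}^{\abs{\tau_s}-1} \gamma^t M \leq \frac{M}{1-\gamma}.
\]
The remaining term $\potential(s_0)$ is a fixed constant. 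Combining gives
\[
\abs{\potential(s)} \leq \gamma^{-N}\!\left(\frac{M}{1-\gamma} + \abs{\potential(s_0)}\right),
\]
a bound independent of $s$, which is exactly what is claimed.

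There is no real obstacle here: the entire substance of the result is already in \cref{thm:finitely_conservative_potential}, and the new hypothesis was chosen precisely so that the blow-up factor $\gamma^{-\abs{\tau_s}}$ in the construction stays controlled. The only point worth emphasizing is why the hypothesis is essential: on the $\mathbb{Z}$-example from the discussion before the statement, $\abs{\tau_s}$ grows linearly in $\abs{s}$, so $\gamma^{-\abs{\tau_s}}$ blows up and the constructed $\potential$ is genuinely unbounded — this is the exact pathology the uniform bound $N$ rules out.
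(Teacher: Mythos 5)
Your proposal is correct and follows essentially the same route as the paper: reuse the explicit potential from \cref{thm:finitely_conservative_potential}, bound the prefactor by $\gamma^{-N}$ using the new hypothesis, and bound the line integral using the boundedness of $R$. The only cosmetic difference is that you bound $\bigl|\int_{\tau_s} R\bigr|$ by the geometric series $M/(1-\gamma)$ while the paper uses $N \cdot C$; both work, and yours is slightly cleaner in also handling the absolute value explicitly.
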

Note that we also need to assume $R$ itself is bounded, as we do throughout the paper.
\begin{proof}
    Let $C \in \R$ be a bound on $R$, i.e.\ $\abs{R(s, s')} \leq C$ for all transitions $s \to s'$ in the graph.
    Furthermore, we have $\abs{\tau_s} \leq N$ for all states $s$ (where $\tau_s$ is the shortest trajectory
    from $s_0$ to $s$ that we used to construct the potential).
    We can use these facts to bound the potential $\potential(s)$. First, note that the returns can be bounded by
    \begin{equation}
        \int_{\tau_s} R \leq \abs{\tau_s} \cdot \max_{t \in \tau_s} R(t) \leq N \cdot C\,.
    \end{equation}
    We also have $\frac{1}{\gamma^{\abs{\tau_s}}} \leq \frac{1}{\gamma^N}$. Combining these,
    \begin{equation}
        \begin{split}
            \potential(s) &\leq \frac{1}{\gamma^N}\left(N \cdot C + \frac{R(s_0, s_0)}{\gamma - 1}\right)\\
            &\leq \frac{C}{\gamma^N}\left(N + \frac{1}{\gamma - 1}\right)\,.
        \end{split}
    \end{equation}
    Thus, $\potential$ is bounded.
\end{proof}

One final observation is that for \emph{complete} transition graphs, these conditions are satisfied, so finite
conservativeness implies bounded potential shaping (and thus also conservativeness).
This is another example of a general phenomenon throughout this paper: for complete graphs,
all the properties of reward functions we discuss collapse to the same one.

\section{An analogy to Lagrangian mechanics}\label{sec:lagrange}
In this section, we will draw an analogy between potential shaping in reinforcement
learning and gauge transformations in Lagrangian mechanics---we expect this to be mainly interesting to readers familiar with physics.

The evolution of a classical physical system over time can be described by a curve $q(t)$
through the so-called \emph{configuration space} of the system. For example,
for a single point particle, the configuration space is simply the space $\R^3$
of possible positions.

Out of all the possible trajectories through configuration space, only one conforms with physical laws
and will actually be taken by the system (assuming a fixed start and end point).
For example, a particle with no forces acting on it will simply move along a straight
line, whereas a particle under the influence of gravity would move along a conic section.
The \emph{Lagrangian} of a system encodes all the necessary information to determine
which trajectories are valid. It is a function $\mc{L}(q, \dot{q}, t)$ that depends on
the configuration $q$ of the system, the time derivative $\dot{q} := \frac{dq}{dt}$, and the time $t$.

The Lagrangian determines the trajectory indirectly.
We define the \emph{action} $S[q]$ of a trajectory $q(t)$ as $S[q] := \int_0^T \mc{L}(q(t), \dot{q}(t), t)\diff t$.
Then the actual physical trajectory will have a \emph{stationary} action, meaning that infinitesimally
small changes in the trajectory with fixed start and end point do not affect the action. Typically, the action is
locally minimal with respect to the trajectory.

We can now draw the analogy to reinforcement learning: the configuration space in physics corresponds
to the state space in an MDP. In physics, we ask which trajectory through configuration space
will be physically realized, and in reinforcement learning, we instead ask which trajectories an
optimal policy takes. A physical trajectory makes the action $S$---i.e.\ the line integral over
the Lagrangian---stationary. Analogously, an optimal trajectory in RL maximizes the returns,
i.e.\ the line integral over the reward function.
\Cref{tab:physics_analogy} in the main paper summarizes this correspondence.

\emph{Gauge transformations} are transformations of the Lagrangian that only change
the action $S$ by a constant, thus leaving the time evolution of the system unchanged.
Since the action is an integral over time, such transformations can be written as
time derivatives of some arbitrary function $\potential$,
\begin{equation}
    \mc{L} \mapsto \mc{L} + \frac{d}{dt} \potential(q(t), t)\,.
\end{equation}

To transfer this idea to reinforcement learning as generally as possible,
we extend our theory to time-dependent reward functions
$R(s, a, s', t)$. Then the returns can be written as the discounted sum
\begin{equation}
    \sum_{t = 0}^{T - 1} \gamma^t R(s_t, a_t, s_{t + 1}, t)\,,
\end{equation}
which we will write as the discounted time integral
$\int_0^{T - 1} R(s_t, a_t, s_{t + 1}, t)\diff t$.
If we define discounted time derivatives as
\begin{equation}
    \frac{d}{dt}f(t) := \gamma f(t + 1) - f(t)\,,
\end{equation}
then these discounted time derivatives together with the discounted time
integral satisfy a version of the fundamental theorem of calculus,
namely $\int_0^{T - 1} \frac{df}{dt} \diff t = \gamma^T f(T) - f(0)$.

Now, just like with gauge transformations in Lagrangian mechanics,
we can add a time derivative to the reward while only changing the returns
by a constant:
\begin{equation}
    R \mapsto R + \frac{d}{dt} \potential(s_t, t)\,.
\end{equation}
Explicitly, this discounted time derivative has the form
\begin{equation}
    \frac{d}{dt} \potential(s_t, t) = \gamma \potential(s_{t + 1}, t + 1) - \potential(s_t, t)\,,
\end{equation}
which is exactly the generalization of potential shaping to time-dependent potentials
described by \citet{devlin2012}.

If we hold constant not only the initial and final \emph{state}, but the entire initial and final \emph{transition},
then we can use time derivatives of functions that depend on transitions rather than only on states:
\begin{equation}
\begin{split}
    \frac{d}{dt} \potential(\tau_t, t) &= \gamma \potential(\tau_{t + 1}, t + 1) - \potential(\tau_t, t) \\
    &= \gamma \potential(s_t, a_t, s_{t + 1}, t + 1) - \potential(s_{t - 1}, a_{t - 1}, s_t, t)\,.
\end{split}
\end{equation}
This is the potential-based advice described by \citet{wiewiora2003}.

To conclude this interlude on Lagrangian mechanics, we should note that the analogy
to reinforcement learning is certainly far from perfect. One key difference is of course
that physics has continuous time whereas time in RL is usually modelled as discrete. That is why
we need \emph{discrete} calculus to analyze potential shaping. As a consequence, some
results from physics, such as the Euler-Lagrange equations, do not carry over as easily
as the concepts discussed here. Another important note is that classical mechanics is deterministic, whereas RL often has stochastic transition dynamics and policies, which means the objective is to maximize \emph{expected returns}. Stochasticity also occurs in physics, but leads to the field of non-equilibrium statistical mechanics.

Despite these differences, we have seen that carrying over the idea of gauge transformations
to reinforcement learning naturally leads to the notion of potential shaping, as well as two
generalizations that were developed later, time-dependent and action-dependent potentials~\citep{devlin2012,wiewiora2003}.

\section{Infinite weighted transition graphs}\label{sec:infinite}
The main paper discussed gradient and line integrals on arbitrary transition graphs,
but only covered the divergence operator for transition graphs with \emph{finite} state
and transition spaces (or essentially equivalently, finite state and action spaces).
The intuitive reason for this is that the divergence operator requires summing over the
incoming and outgoing transitions of a state. While these sums could be replaced by
infinite series for \emph{countably} infinite action spaces (e.g.\ the integers),
we will in general need to replace them with integrals. This is roughly what this section
does in a formal way.

\begin{definition}
    A \emph{weighted transition graph} is a tuple $(\states, \actions, \transitions, \gamma, \stateweights, \weights)$
    such that $(\states, \actions, \transitions, \gamma)$ is a transition graph and 
    \begin{itemize}
        \item $\states$ is a measurable space of states (i.e.\ a set equipped with a $\sigma$-algebra $\stateAlgebra$),
        \item $\actions$ is a measurable space of actions equipped with a $\sigma$-algebra $\actionAlgebra$,
        \item $\transitions := \states \times \actions \times \states$ is the set of transitions,
              equipped with the product $\sigma$-algebra, which we denote as $\transitionAlgebra$.
        \item $\gamma \in [0, 1]$ is the discount factor,
        \item $\stateweights: \stateAlgebra \to [0, \infty]$ is a $\sigma$-finite measure on $\states$,
        \item $\weights: \transitionAlgebra \to [0, \infty]$ is a $\sigma$-finite measure on $\transitions$.
    \end{itemize}
\end{definition}
A potential point of confusion is that the transition space is now the \emph{entire} Cartesian
product $\states \times \actions \times \states$. Impossible transitions are modelled solely
using $\weights$ (they will not contribute to the measure of any set). This choice is purely for
convenience: in the main paper, we used a restricted transition space to avoid non-empty sets
with measure zero. But this is unavoidable now anyway, and things will become slightly easier later
if we do not have to worry about the transition space.

Note that this definition generalizes the one we gave for finite weighted transition graphs:
if $\states$ is finite, we can set $\stateAlgebra$ to the power set $2^{\states}$ of $\states$,
and similarly for $\actions$. Then the state and transition weights naturally induce measures
on $\states$ and $\transitions$.

In this section, we define $\scalars := L^2(\states, \stateweights, \R)$ and $\vectors := L^2(\transitions, \weights, \R)$,
so potentials and reward functions are not arbitrary functions anymore, but rather square-integrable
ones (under the Lebesgue integral with respect to $\stateweights$ and $\weights$). Technically, they are
\emph{equivalence classes} of square-integrable functions as is usual when defining $L^p$ spaces;
this is what gives us positive-definiteness of the inner product even though some sets of transitions
have measure zero.
Note that on finite weighted transition graphs, all potentials and reward functions are automatically
square-integrable, so this again generalizes the previous definition.

We define the usual $L^2$ inner product on $\scalars$ and $\vectors$, i.e.
\begin{equation}
    \bracket{\potential, \Psi} := \int_\states \potential \Psi d\stateweights\,,
\end{equation}
for $\potential, \Psi \in \scalars$, and analogously for $\vectors$ using $\weights$ instead of $\stateweights$.

Before introducing the divergence operator, we check that the gradient is continuous under this inner product.
This is not always the case, instead it requires an additional assumption on the measures $\stateweights$
and $\weights$:
\begin{definition}\label{def:regular}
    We call a transition graph $(\states, \actions, \transitions, \gamma, \stateweights, \weights)$
    \emph{regular} if there is a constant $C \in \R$ such that
    for all measurable sets $M \subseteq \states$, we have
    \begin{equation}
        \begin{split}
            &\weights(M \times \actions \times \states) \leq C \stateweights(M)\,,\\
            \text{and } & \weights(\states \times \actions \times M) \leq C \stateweights(M)\,.
        \end{split}
    \end{equation}
\end{definition}
Note that $M \times \actions \times \states$ and $\states \times \actions \times M$ are always measurable
since we use the product $\sigma$-algebra on the space of transitions, so this is well-defined.

\begin{lemma}\label{thm:grad_bounded}
    Let $\scalars$ and $\vectors$ be the space of potentials and reward functions
    on a regular transition graph. Then $\grad: \scalars \to \vectors$ is a bounded linear operator.
\end{lemma}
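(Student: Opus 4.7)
The plan is to verify linearity (immediate from the formula $(\grad\potential)(s,a,s') = \gamma\potential(s') - \potential(s)$) and then establish the norm bound $\|\grad\potential\|_\vectors \leq K\|\potential\|_\scalars$ for some constant $K$ depending only on $\gamma$ and the regularity constant $C$ from \cref{def:regular}. The regularity assumption is precisely what is needed to convert integrals over $\transitions$ with respect to $\weights$ into integrals over $\states$ with respect to $\stateweights$, so this is the moment in the paper where that hypothesis earns its keep.

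First I would start with the elementary pointwise inequality $(\gamma a - b)^2 \leq 2\gamma^2 a^2 + 2 b^2$ (from $(x-y)^2 \leq 2x^2 + 2y^2$) to obtain
\begin{equation}
    \|\grad\potential\|_\vectors^2 = \int_\transitions \bigl(\gamma\potential(s') - \potential(s)\bigr)^2 \, d\weights \leq 2\gamma^2 \int_\transitions \potential(s')^2 \, d\weights + 2\int_\transitions \potential(s)^2 \, d\weights.
\end{equation}
Each integrand depends on only one of the state coordinates, so I would introduce the two marginals $\nu_1(M) := \weights(M \times \actions \times \states)$ and $\nu_2(M) := \weights(\states \times \actions \times M)$ on $(\states,\stateAlgebra)$. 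A standard measure-theoretic argument (first for indicator functions, then for simple functions, then by monotone convergence) shows $\int_\transitions \potential(s)^2 \, d\weights = \int_\states \potential(s)^2 \, d\nu_1$, and similarly for the $s'$ integral with $\nu_2$.

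Now regularity says exactly that $\nu_1(M) \leq C \stateweights(M)$ and $\nu_2(M) \leq C\stateweights(M)$ for every measurable $M$, which as a relation between measures means $d\nu_i \leq C\, d\stateweights$, hence $\int f \, d\nu_i \leq C\int f\, d\stateweights$ for any nonnegative measurable $f$. Applying this to $f = \potential^2$ gives
\begin{equation}
    \|\grad\potential\|_\vectors^2 \leq 2\gamma^2 C \|\potential\|_\scalars^2 + 2 C \|\potential\|_\scalars^2 = 2C(1+\gamma^2)\|\potential\|_\scalars^2,
\end{equation}
so $\grad$ is bounded with $\|\grad\| \leq \sqrt{2C(1+\gamma^2)}$. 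In particular, $\grad\potential$ actually lies in $\vectors = L^2(\transitions, \weights, \R)$ whenever $\potential \in \scalars$, so the map is well-defined as stated.

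The main obstacle is essentially bookkeeping: verifying that the marginal measures $\nu_1, \nu_2$ are well-defined $\sigma$-finite measures on $(\states,\stateAlgebra)$ and that the change-of-variables identity $\int_\transitions \potential(s)^2 \, d\weights = \int_\states \potential(s)^2 \, d\nu_1$ holds in this generality. This is a routine application of the monotone class theorem (or, equivalently, of Tonelli after disintegrating), but it is the only step that is not purely algebraic and is where one has to be careful that $\potential$ being defined only up to $\stateweights$-a.e.\ equivalence is compatible with the construction; regularity ensures that $\stateweights$-null sets remain $\nu_i$-null, so the integrand is unambiguously defined.
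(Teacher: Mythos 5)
Your proof is correct and follows essentially the same route as the paper's: both reduce boundedness to the estimate $\int_\transitions \potential(s)^2\,d\weights \leq C\norm{\potential}_2^2$ (and likewise for $s'$) obtained from the regularity condition via the pushforward/marginal measures. The only cosmetic difference is that you combine the two terms with the pointwise inequality $(x-y)^2 \leq 2x^2+2y^2$ (constant $\sqrt{2C(1+\gamma^2)}$) where the paper uses the $L^2$ triangle inequality (slightly sharper constant $(1+\gamma)\sqrt{C}$); this is immaterial.
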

Note that the gradient operator is still defined the same way as in the main text; since it doesn't make use of the weights, infinite state spaces do not introduce any difficulties for the gradient.
\begin{proof}
    Linearity is clear. To show that the gradient is bounded, we let $p_1$ and $p_2$ be the projections
    from $\transitions$ to the current and next state respectively, i.e.\
    \begin{equation}
        p_1: \transitions \to \states, \quad p_1(s, a, s') := s\,,
    \end{equation}
    and similarly $p_2(s, a, s') := s'$. This notation allows us to write
    $\grad \potential = \gamma \potential \circ p_2 - \potential \circ p_1$. 
    Since the transition graph is regular, $\norm{\potential \circ p_1}_2 \leq \sqrt{C}\norm{\potential}_2$
    for $i \in \{1, 2\}$, where $C$ is the constant from \cref{def:regular}.
    Indeed,
    \begin{equation}
        \begin{split}
            \norm{\potential \circ p_1}^2_2 &= \int_\transitions (\potential \circ p_1)(s, a, s')^2 d\weights(s, a, s')\\
            &= \int_\transitions \potential(s)^2 d\weights(s, a, s')\\
            &= \int_\states \potential(s)^2 d\weights(s, \actions, \states)\\
            &\leq \int_\states \potential(s)^2 C d\stateweights(s) \\
            &= C\norm{\potential}^2_2\,.
        \end{split}
    \end{equation}
    
    Putting this all together, we get
    \begin{equation}
        \begin{split}
            \norm{\grad \potential}_2 &= \norm{\gamma \potential \circ p_2 - \potential \circ p_1}_2\\
            &\leq \gamma\norm{\potential \circ p_2}_2 + \norm{\potential \circ p_1}_2\\
            &\leq (1 + \gamma) \sqrt{C} \norm{\potential}_2\,.
        \end{split}
    \end{equation}
    That shows $\grad$ is bounded.
\end{proof}

That lets us again define a divergence operator as the negative adjoint of the gradient:
\begin{corollary}
    Let $\scalars$ and $\vectors$ be the spaces of potentials and reward functions
    for a regular transition graph. Then there is a unique linear operator $\div: \vectors \to \scalars$ such that
    \begin{equation}
        \bracket{\grad \potential, R} = \bracket{\potential, -\div R}
    \end{equation}
    for all potentials $\potential \in \scalars$ and rewards $R \in \rewards$.
\end{corollary}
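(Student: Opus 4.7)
The plan is to recognize this corollary as a direct instance of the standard functional-analytic fact that every bounded linear operator between Hilbert spaces admits a unique Hilbert adjoint. Since \cref{thm:grad_bounded} has already established that $\grad: \scalars \to \vectors$ is a bounded linear map between the Hilbert spaces $\scalars = L^2(\states, \stateweights, \R)$ and $\vectors = L^2(\transitions, \weights, \R)$, essentially all the work has already been done; the corollary is then a straightforward invocation of the Riesz representation theorem.

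Concretely, I would proceed as follows. Fix an arbitrary $R \in \vectors$ and consider the map $\Lambda_R: \scalars \to \R$ defined by $\Lambda_R(\potential) := \bracket{\grad \potential, R}$. Linearity in $\potential$ is immediate. Boundedness follows from Cauchy--Schwarz and \cref{thm:grad_bounded}: if $C$ is the operator-norm bound on $\grad$, then $|\Lambda_R(\potential)| \leq \norm{\grad \potential}_2 \norm{R}_2 \leq C\norm{R}_2 \norm{\potential}_2$. So $\Lambda_R$ is a bounded linear functional on the Hilbert space $\scalars$, and by the Riesz representation theorem there exists a unique element $T(R) \in \scalars$ with $\Lambda_R(\potential) = \bracket{\potential, T(R)}$ for every $\potential$. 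Setting $\div R := -T(R)$ gives the desired identity.

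It remains to verify linearity of $\div$ in $R$ and its uniqueness as an operator. Linearity follows from the uniqueness clause of Riesz: for $R_1, R_2 \in \vectors$ and $\alpha, \beta \in \R$, both $T(\alpha R_1 + \beta R_2)$ and $\alpha T(R_1) + \beta T(R_2)$ represent the same functional $\potential \mapsto \bracket{\grad \potential, \alpha R_1 + \beta R_2}$, so they coincide. Uniqueness of $\div$ as an operator is similar: if $D_1$ and $D_2$ both satisfy the adjoint relation, then $\bracket{\potential, D_1 R - D_2 R} = 0$ for all $\potential \in \scalars$, so $D_1 R = D_2 R$ in $\scalars$ for every $R$.

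There is no real obstacle here; the only point requiring a bit of care is that elements of $\scalars$ and $\vectors$ are equivalence classes of functions rather than pointwise-defined functions, but this is precisely what ensures positive-definiteness of the inner product, and the Riesz machinery handles it automatically. The substantive content of the corollary is entirely carried by \cref{thm:grad_bounded} (which in turn relied on the regularity assumption \cref{def:regular}); once boundedness of $\grad$ is in hand, the existence and uniqueness of $\div$ is immediate.
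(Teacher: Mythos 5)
Your proposal is correct and matches the paper's approach: the paper's proof simply notes that $\scalars$ and $\vectors$ are Hilbert spaces and that $\grad$ is bounded (by the preceding lemma), hence has a unique adjoint. You have merely unpacked the standard Riesz-representation proof of that adjoint-existence fact, which the paper invokes as a black box.
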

\begin{proof}
    $\scalars$ and $\rewards$ are both Hilbert spaces (recall that we defined them as $L^2$ spaces).
    $\grad: \scalars \to \rewards$ is a bounded linear operator, so it has a (unique) linear adjoint.
\end{proof}

Similar to the finite case, we can also give a more explicit expression for the divergence:
\begin{proposition}
    Let $(\states, \actions, \transitions, \gamma, \stateweights, \weights)$ be a regular
    transition graph, and assume that $\weights$ is finite (i.e.\ $\weights(\transitions) < \infty$).
    For any measurable set $M \subseteq \states$ of states, let
    $\transitions_{\text{out}}(M) := M \times \actions \times \states$ be the set of transitions leaving
    $M$, and $\transitions_{\text{in}}(M) := \states \times \actions \times M$ the set of transitions
    into $M$. Furthermore, we define
    \begin{equation}\label{eq:lambda_definition}
        \lambda(M) := \int_{\transitions_{\text{out}}(M)} R \diff\weights - \gamma \int_{\transitions_{\text{in}}(M)} R \diff\weights\,.
    \end{equation}
    
    Then $\lambda$ is a signed measure on $\scalars$, which is absolutely continuous with respect to $\stateweights$,
    and the divergence is given by the Radon-Nikodym derivative
    \begin{equation}
        (\div R)(s) = \frac{d\lambda}{d\stateweights}(s)\,.
    \end{equation}
\end{proposition}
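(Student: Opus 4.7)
The plan is to construct a candidate for $\div R$ using the Radon-Nikodym theorem and then check it satisfies the adjoint identity characterizing the divergence. First I would verify $\lambda$ is a finite signed measure. Since $\weights(\transitions) < \infty$ and $R \in L^2(\weights)$, Cauchy-Schwarz gives $R \in L^1(\weights)$, so the set function $E \mapsto \int_E R \diff \weights$ is a finite signed measure on $\transitions$. Pushing this forward under the projections $p_1(s, a, s') = s$ and $p_2(s, a, s') = s'$ yields two finite signed measures $\mu_{\text{out}}(M) := \int_{\transitions_{\text{out}}(M)} R \diff \weights$ and $\mu_{\text{in}}(M) := \int_{\transitions_{\text{in}}(M)} R \diff \weights$ on $\states$, and $\lambda = \mu_{\text{out}} - \gamma \mu_{\text{in}}$ inherits the signed-measure structure.

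Absolute continuity follows directly from regularity: if $\stateweights(M) = 0$, then $\weights(\transitions_{\text{out}}(M)) \leq C\stateweights(M) = 0$ and likewise $\weights(\transitions_{\text{in}}(M)) = 0$, so both integrals defining $\lambda(M)$ vanish on $\stateweights$-null sets. Because $\stateweights$ is $\sigma$-finite and $\lambda$ is a finite signed measure absolutely continuous with respect to $\stateweights$, the Radon-Nikodym theorem furnishes a density $f := d\lambda / d\stateweights \in L^1(\stateweights)$ with $\lambda(M) = \int_M f \diff \stateweights$.

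To identify $f$ with $\div R$, I would invoke the preceding corollary, which produces $\div R \in L^2(\stateweights)$ as the unique element satisfying $\bracket{\grad \potential, R} = -\bracket{\potential, \div R}$ for every $\potential \in \scalars$. For any set $M$ of finite $\stateweights$-measure, the indicator $\mathbf{1}_M$ lies in $\scalars$, and using $(\grad \mathbf{1}_M)(s, a, s') = \gamma \mathbf{1}_M(s') - \mathbf{1}_M(s)$ a direct computation yields
\begin{equation}
    \bracket{\grad \mathbf{1}_M, R} = \gamma \int_{\transitions_{\text{in}}(M)} R \diff \weights - \int_{\transitions_{\text{out}}(M)} R \diff \weights = -\lambda(M)\,.
\end{equation}
By the adjoint identity this equals $-\int_M \div R \diff \stateweights$, so $\int_M \div R \diff \stateweights = \lambda(M) = \int_M f \diff \stateweights$ for every $M$ of finite $\stateweights$-measure. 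Since $\stateweights$ is $\sigma$-finite, this forces $\div R = f$ almost everywhere, which is the claimed formula.

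I expect the main obstacle to be the measure-theoretic bookkeeping---in particular, justifying countable additivity of $\mu_{\text{out}}$ and $\mu_{\text{in}}$ (a standard dominated-convergence argument using $R \in L^1(\weights)$) and bridging from the $L^2$-level adjoint identity to the set-integral identity via indicator test functions. Both steps rely crucially on the assumption that $\weights$ is finite and $\stateweights$ is $\sigma$-finite, which together ensure all the relevant integrals and inner-product pairings are well defined.
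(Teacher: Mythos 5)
Your proof is correct, and it completes the argument by a genuinely different route than the paper. Both proofs establish that $\lambda$ is a well-defined finite signed measure and derive absolute continuity from regularity in exactly the same way, and both hinge on the same computation $\bracket{\grad \mathbf{1}_M, R} = -\lambda(M)$ for indicator potentials. The difference is in how the identification $(\div R) = d\lambda/d\stateweights$ is finished. The paper takes the candidate $f = d\lambda/d\stateweights$ and verifies directly that $\bracket{\potential, f} = -\bracket{\grad\potential, R}$ for \emph{every} $\potential \in \scalars$, which requires the full measure-theoretic bootstrap: indicators, then simple functions by linearity, then non-negative $\potential$ by monotone approximation and dominated convergence, then general $\potential$ by splitting into positive and negative parts. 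You instead invoke the already-established existence and uniqueness of $\div R$ as the negative adjoint of the bounded operator $\grad$, test the adjoint identity only against indicators of finite-measure sets, and conclude $\int_M \div R \,\diff\stateweights = \lambda(M) = \int_M f \,\diff\stateweights$ for all such $M$, whence $\div R = f$ almost everywhere by $\sigma$-finiteness. This buys a shorter proof that offloads the approximation work onto the abstract Hilbert-space adjoint (and the standard ``equal integrals over all sets implies equal a.e.'' lemma), at the cost of being less self-contained; the paper's version, by contrast, constructs the adjoint explicitly and would survive even without the preceding corollary. One small point worth making explicit in your write-up: to pass from agreement of integrals to $\div R = f$ a.e.\ you need the difference $\div R - f$ to be integrable on each finite-measure set, which holds because $\div R \in L^2(\stateweights) \subseteq L^1$ on finite-measure sets and $f \in L^1(\stateweights)$ since $\lambda$ is finite.
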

Before proving this claim, let us give some intuition: what we would like to do is just
replace the sums in the finite version of the divergence (\cref{eq:divergence_explicit}) with integrals.
However, we would be integrating over slices of the form $\{s\} \times \actions \times \states$, which
might very well have measure zero. We would then divide by $\stateweights(s)$, which might also very
well have measure zero, so we would not get a well-defined expression in general. Because of that,
we instead have to define the divergence as a limit (which happens implicitly via our use of the
Radon-Nikodym derivative). You should think of $M$ as a small set containing a state $s$.
Informally speaking, the divergence at $s$ is then defined by taking the limit as $M$ becomes smaller.
This is very reminiscent of one way to define the divergence in Euclidean space, namely as the net flow
through an infinitesimal sphere surrounding the point in question.

The condition that $\weights$ be finite ensures that $\lambda$ in \cref{eq:lambda_definition} is well-defined.
A weaker condition should suffice as well, but we focus on the case of finite $\weights$
to simplify matters, and since in practice $\weights$ will often be a normalized probability distribution
anyway.

One technical note: the Radon-Nikodym derivative is only unique up to a $\stateweights$-null set,
so the divergence is only really defined up to a $\stateweights$-null set as well. But that is fine
for our purposes: the divergence is supposed to be in $\scalars = L^2(\states)$ and is thus
an \emph{equivalence class} of functions that are the same up to a $\stateweights$-null set.

\begin{proof}
    $\lambda$ is indeed a well-defined signed measure, in particular $\weights$ is finite, so both integrals are finite and we never get an $\infty - \infty$ expression. That $\lambda$ is
    absolutely continuous with respect to $\stateweights$ follows immediately from the regularity
    of the transition graph: if $\stateweights(M) = 0$, then $\weights(\transitions_{\text{out}}(M)) = 0$ by regularity,
    and the same for $\transitions_{\text{in}}(M)$. So the integrals in the definition of $\lambda(M)$ are
    over $\weights$-null sets, thus $\lambda(M) = 0$.

    We now verify that the given expression is the negative adjoint to the gradient.
    Let $R \in \rewards$ be a reward function and $\potential \in \scalars$ a potential.
    We'll write $\div$ for the expression just given
    (even though so far, we haven't shown that this is in fact the divergence).
    \begin{equation}
        \begin{split}
            \bracket{\potential, \div R} &= \int_\states \potential \frac{d\lambda}{d\stateweights} \diff\stateweights\\
            &= \int_\states \potential \diff\lambda \\
            &\overset{(*)}{=} \int_\transitions R(s, a, s') \left(\potential(s) - \gamma \potential(s')\right) \diff\weights(s, a, s')\\
            &= -\bracket{\grad \potential, R}\,.
        \end{split}
    \end{equation}
    It remains to show $(*)$. We will do so using a common type of argument for showing equality of
    Lebesgue integrals: we start by considering indicator functions for $\potential$, then generalize
    to simple functions, then non-negative functions, and finally arbitrary square-integrable functions.

    First, assume that $\potential = 1_M$ for some measurable set $M \subseteq \states$. Then,
    \begin{equation}
        \begin{split}
            \int_\states \potential \diff\lambda &= \lambda(M) \\
            &= \int_{\transitions_{\text{out}}(M)} R \diff\weights - \gamma \int_{\transitions_{\text{in}}(M)} R \diff\weights \\
            &= \int_\transitions R \cdot 1_{M \times \actions \times \states} \diff\weights
            - \gamma \int_\transitions R \cdot 1_{\states \times \actions \times M} \diff\weights \\
            &= \int_\transitions R \cdot \left(1_{M \times \actions \times \states}
            - \gamma 1_{\states \times \actions \times M} \right) \diff\weights \\
            &= \int_\transitions R(s, a, s')\left(1_M(s) - \gamma 1_M(s')\right) \diff\weights(s, a, s') \\
            &= \int_\transitions R(s, a, s')\left(\potential(s) - \gamma\potential(s')\right) \diff\weights(s, a, s')\,.
        \end{split}
    \end{equation}

    The rest of the proof is a typical argument: for any simple function $\potential$ (i.e.\ a linear combination
    of indicator functions), the equality still holds because of linearity in $\potential$.
    For any non-negative measurable function $\potential$, we can then find a monotonically increasing
    sequence of simple functions $\potential_k$ that converges pointwise to $\potential$.
    By definition of the Lebesgue integral
    \begin{equation}
        \int_\states \potential \diff\lambda = \lim_{k \to \infty} \int_\states \potential_k \diff\lambda\,.
    \end{equation}
    For each $\potential_k$, we know that $(*)$ holds (since $\potential_k$ is simple), so
    \begin{equation}\label{eq:simple_potentials}
        \begin{split}
            \int_\states \potential_k \diff\lambda &= \int_\transitions R(s, a, s')
            \left(\potential_k(s) - \gamma\potential_k(s')\right) \diff\weights(s, a, s')\\
            &= \int_\transitions R(s, a, s')\potential_k(s) \diff\weights(s, a, s') \\
            &\phantom{=} - \gamma \int_\transitions R(s, a, s')\potential_k(s') \diff\weights(s, a, s')\,.
        \end{split}  
    \end{equation}
    Since $\potential_k$ is monotonically increasing, we have
    \begin{equation}\label{eq:dominated_convergence_condition}
    \begin{split}
        \abs{R(s, a, s')\potential_k(s))} &\leq \abs{R(s, a, s')}\abs{\potential_k(s)} \\
        &\leq \abs{R(s, a, s')}\abs{\potential(s)}\,.
    \end{split}
    \end{equation}
    $R$ is square-integrable by assumption. $\potential$ is square-integrable as a function on $\states$,
    and by the regularity of the transition graph, it is thus also square-integrable as a function on $\transitions$.
    So the RHS in \cref{eq:dominated_convergence_condition} is integrable. The same holds for
    $\potential(s')$ instead of $\potential(s)$. Thus, we can take the limit $k \to \infty$ in
    \cref{eq:simple_potentials} and apply dominated convergence to each of the two integrals on the RHS.
    That proves $(*)$ for non-negative $\potential$.

    Finally, we can write any square-integrable $\potential$ as a sum of its positive and negative components,
    and linearity again ensures that $(*)$ carries over. That proves $(*)$ for arbitrary
    $\potential \in \scalars$, which concludes the proof.
\end{proof}

\subsection{Divergence-free rewards}
The key result about divergence-free rewards is the decomposition as a direct sum in
\cref{thm:hodge}, $\vectors = \ker(\div) \oplus \im(\grad)$. It is still true in the infinite case that $\ker(\div)$ is the
orthogonal complement of $\im(\grad)$---that much follows immediately from the fact
that the divergence is the negative adjoint to the gradient. However, in infinite-dimensional
Hilbert spaces, a linear subspace and its orthogonal complement do not necessarily form
a direct sum for the entire space. This is only the case if the subspace in question
is closed.

We therefore need to show that $\im(\grad)$ is closed (under the $L^2$ norm on $\vectors$).
The following result gives one set of sufficient conditions for that:
\begin{lemma}
    Let $(\states, \actions, \transitions, \gamma, \stateweights, \weights)$ be a regular transition
    graph with a constant $\alpha > 0$ such that for any $M \subseteq \states$,
    $\weights(M \times \actions \times M) \geq \alpha \stateweights(M)$, and let $\gamma < 1$.
    Then $\im(\grad)$ is closed for this transition graph.
\end{lemma}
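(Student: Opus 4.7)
The plan is to prove that $\grad : \scalars \to \vectors$ is bounded below with the explicit constant $(1-\gamma)\sqrt{\alpha}$, i.e.
\begin{equation}
    \|\grad \potential\|_2 \;\geq\; (1-\gamma)\sqrt{\alpha}\,\|\potential\|_2 \qquad \text{for every } \potential \in \scalars.
\end{equation}
Once this is established, closedness of $\im(\grad)$ is automatic: a bounded linear operator between Hilbert spaces has closed range iff it is bounded below on $(\ker)^{\perp}$, and the inequality forces $\ker(\grad) = \{0\}$, so the condition reduces to the global lower bound. We already know $\grad$ is bounded above by \cref{thm:grad_bounded}.

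The strategy is to verify the lower bound first for (measurable) simple functions and then extend by density. Take $\potential = \sum_{i = 1}^n c_i 1_{M_i}$ with $\{M_i\}_{i=1}^n$ a finite measurable partition of $\states$. On $M_i \times \actions \times M_j$ we have $(\grad \potential)(s, a, s') = \gamma c_j - c_i$, so
\begin{equation}
    \|\grad \potential\|_2^2 \;=\; \sum_{i, j} (\gamma c_j - c_i)^2\, \weights(M_i \times \actions \times M_j) \;\geq\; \sum_i (1-\gamma)^2 c_i^2\, \weights(M_i \times \actions \times M_i),
\end{equation}
where the inequality simply discards the non-negative off-diagonal ($i \neq j$) terms. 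Applying the hypothesis $\weights(M_i \times \actions \times M_i) \geq \alpha \stateweights(M_i)$ and recognizing $\|\potential\|_2^2 = \sum_i c_i^2 \stateweights(M_i)$ yields the claimed lower bound for simple $\potential$.

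Simple functions of this form are dense in $\scalars = L^2(\states, \stateweights)$, so for general $\potential \in \scalars$ we can pick simple $\potential_n \to \potential$ in $\scalars$. By \cref{thm:grad_bounded}, $\grad$ is continuous, so $\grad \potential_n \to \grad \potential$ in $\vectors$; both norms converge, and the inequality $\|\grad \potential_n\|_2 \geq (1-\gamma)\sqrt{\alpha}\,\|\potential_n\|_2$ transfers to the limit.

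The main subtlety is recognizing that the bound only drops out after reducing to indicator functions. Estimating $(\gamma \potential(s') - \potential(s))^2$ pointwise with Cauchy-Schwarz or Young gives at best $\|\grad \potential\|_2 \geq (\sqrt{\alpha} - \gamma\sqrt{C})\,\|\potential\|_2$, which is vacuous as soon as $\gamma \geq \sqrt{\alpha/C}$. Working at the level of a partition lets us exploit the hypothesis cell-by-cell: the diagonal pieces $M_i \times \actions \times M_i$ behave exactly like weighted self-loops, on which $\grad$ acts as multiplication by $\gamma - 1$, supplying the $(1 - \gamma)^2$ factor uniformly for all $\gamma < 1$.
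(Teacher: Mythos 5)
Your proof is correct, and while it shares the paper's overall skeleton---reduce closedness of $\im(\grad)$ to a uniform lower bound $\norm{\grad \potential}_2 \geq c\norm{\potential}_2$, then obtain that bound by partitioning $\states$, discarding all off-diagonal blocks $M_i \times \actions \times M_j$ ($i \neq j$), and applying the hypothesis $\weights(M \times \actions \times M) \geq \alpha\stateweights(M)$ on the diagonal blocks---the key estimate is realized differently. The paper works with an arbitrary $\potential \in \scalars$ directly, partitioning into geometric shells $M_k$ on which $\abs{\potential}$ varies by at most a factor of $\sqrt{\gamma}$, so that on $M_k \times \actions \times M_k$ one still gets the pointwise bound $(\grad\potential)^2 \geq \gamma^{k+1}(\sqrt{\gamma}-1)^2$; this yields the constant $c = \sqrt{\alpha}(1-\sqrt{\gamma})$. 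You instead make the partition \emph{exact} by restricting to simple functions, where the diagonal contribution is literally $(1-\gamma)^2 c_i^2$, and then transfer the bound to all of $\scalars$ by density plus the continuity of $\grad$ from \cref{thm:grad_bounded}. Your route is cleaner (no shell bookkeeping, no case analysis on signs via $\abs{\gamma\potential(s') - \potential(s)} \geq \abs{\gamma\abs{\potential(s')} - \abs{\potential(s)}}$) and gives the slightly better constant $\sqrt{\alpha}(1-\gamma) \geq \sqrt{\alpha}(1-\sqrt{\gamma})$, at the price of invoking the standard approximation machinery for $L^2$; the paper's argument is self-contained and needs no limiting step. One small point worth making explicit in a write-up: for the identity $\norm{\grad\potential}_2^2 = \sum_{i,j}(\gamma c_j - c_i)^2\,\weights(M_i\times\actions\times M_j)$ to be unproblematic when one cell of the partition (the zero set of $\potential$) has infinite measure, note that regularity guarantees every block meeting a finite-measure cell has finite $\weights$-measure, and the block of the zero cell with itself contributes $0$.
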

The additional condition of such an $\alpha$ existing is essentially a version of demanding the existence of self-loops,
just adapted to the infinite setting (where a single transition may very well have zero measure). Note that this is not the same as regularity---the inequality is the other way around, and we are bounding $\weights(M \times \actions \times M)$ rather than $\weights(M \times \actions \times \states)$.
\begin{proof}
    First, observe that it suffices to show there is some $C > 0$ such that
    $\norm{\grad \potential}_{2, \vectors} \geq C \norm{\potential}_{2, \scalars}$.
    The reason is as follows: let $(F_n)$ be a sequence in $\im(\grad)$ that converges
    in $\vectors$. In particular, it is a Cauchy sequence. Now pick a sequence
    $(\potential_n)$ of pre-images. Because of the existence of $C$ as above,
    $(\potential_n)$ is also Cauchy. Since $\scalars$ is complete, the sequence
    converges, $\potential_n \to \potential$. Because $\grad$ is continuous
    (see \cref{thm:grad_bounded}), we have $F_n = \grad(\potential_n) \to \grad(\potential)$,
    so the limit of $(F_n)$ lies in $\im(\grad)$.

    So now we show that $\norm{\grad \potential}_{2, \vectors} \geq C \norm{\potential}_{2, \scalars}$
    holds for some fixed $C$. To do so, let
    \begin{equation}
        M_k := \setcomp{s \in \states}{\gamma^{k/2} \leq \abs{\potential(s)} < \gamma^{(k+1)/2}}\,.
    \end{equation}
    Each $M_k$ is clearly measurable (since $\potential$ is measurable), and all $M_k$ together
    (for $k \in \Z$) form a partition of $\setcomp{s \in \states}{\potential(s) \neq 0}$ (since $\gamma < 1$).
    In particular, this means that $M_k \times \actions \times M_k$ is disjoint from $M_j \times \actions \times M_j$ for $j \neq k$,
    and that
    \begin{equation}
        \bigcup_{k \in \Z} M_k \times \actions \times M_k \subseteq \states \times \actions \times \states\,.
    \end{equation}
    Thus,
    \begin{equation}
        \begin{split}
            \norm{\grad \potential}_{2, \vectors}^2 &= \int_{\states \times \actions \times \states} (\grad \potential)^2 \diff w\\
            &\geq \sum_{k \in \Z} \int_{M_k \times \actions \times M_k} (\grad \potential)^2 \diff \weights\,.
        \end{split}
    \end{equation}
    (Note that $(\grad \potential)^2$ is of course non-negative). Intuitively, we simply throw away
    all transitions that are not between states with similar absolute potentials.

    Next, note that we can bound $(\grad \potential)^2$ below on each $M_k$. Let $s, s' \in M_k$
    and $a \in \actions$. Then
    \begin{equation}
        \begin{split}
            (\grad \potential)(s, a, s')^2 &= (\gamma \potential(s') - \potential(s))^2 \\
            &\geq (\gamma \abs{\potential(s')} - \abs{\potential(s)})^2\\
            &\geq \left(\gamma^{k/2 + 1} - \gamma^{(k+1)/2} \right)^2\\
            &= \gamma^{k + 1}(\sqrt{\gamma} - 1)^2\,.
        \end{split}
    \end{equation}
    Putting this together, we get
    \begin{equation}
        \begin{split}
            \norm{\grad \potential}_{2, \vectors}^2 &\geq \sum_{k \in \Z} \weights(M_k \times \actions \times M_k) \gamma^{k + 1}(\sqrt{\gamma} - 1)^2\\
            &\geq \sum_{k \in \Z} \alpha \stateweights(M_k) \gamma^{k + 1}(\sqrt{\gamma} - 1)^2\,,
        \end{split}
    \end{equation}
    where we used the existence of $\alpha$ from the lemma statement.
    On the other hand, we have
    \begin{equation}
        \begin{split}
            \norm{\potential}_{2, \scalars}^2 &= \int_\states \potential^2 \diff \stateweights \\
            &= \sum_{k \in \Z} \int_{M_k} \potential^2 \diff \stateweights \\
            &\leq \sum_{k \in \Z} \gamma^{k + 1} \stateweights(M_k)\,,
        \end{split}
    \end{equation}
    since $\abs{\potential(s)} < \gamma^{(k+1)/2}$ for all $s \in M_k$ by definition.

    Combining the two bounds, we can see that
    \begin{equation}
        \norm{\grad \potential}_{2, \vectors}^2 \geq \alpha (\sqrt{\gamma} - 1)^2 \norm{\potential}_{2, \scalars}^2\,.
    \end{equation}
    As argued at the beginning of this proof, this implies that $\im(\grad)$ is closed as claimed, using $C = \sqrt{\alpha} (1 - \sqrt{\gamma})$.
\end{proof}
Weaker conditions are likely also sufficient for $\im(\grad)$ to be closed.

\section{Additional results}\label{sec:more_results}
\subsection{Optimality-preserving reward functions}
\begin{proposition}\label{thm:optimality_preserving}
    Let $F$ be a reward function on a transition graph $\graph = (\states, \actions, \transitions, \gamma)$.
    Then the following are equivalent: for any MDP $M$ compatible with $\graph$ with any reward function $R$,
    \begin{enumerate}[(i)]
        \item The ordering over policies in $M$ by expected returns under
              $R$ is the same as the ordering by expected returns under $R + F$.\label{item:ordering}
        \item $R + F$ has the same set of optimal policies in $M$
              as $R$ (i.e.\ $F$ is optimality-preserving).\label{item:shaping}
        \item All policies in $M$ are optimal under $F$.\label{item:all_policies_optimal}
        \item For any reachable state $s \in \states$ and any actions $a, a' \in \actions$
              that can be taken in $s$, $Q_F^*(s, a) = Q_F^*(s, a')$.\label{item:q_star}
    \end{enumerate}
\end{proposition}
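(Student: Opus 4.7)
The plan is to prove the equivalences via a cycle $\ref{item:ordering} \Rightarrow \ref{item:shaping} \Rightarrow \ref{item:all_policies_optimal} \Rightarrow \ref{item:ordering}$, together with a separate equivalence $\ref{item:all_policies_optimal} \Leftrightarrow \ref{item:q_star}$. The only nontrivial tools needed are the linearity of expected returns in the reward function and the standard Bellman characterization of optimal policies.

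The implication $\ref{item:ordering} \Rightarrow \ref{item:shaping}$ is immediate: if two total orders on policies agree, then their maxima are the same set. For $\ref{item:shaping} \Rightarrow \ref{item:all_policies_optimal}$, the idea is to use the freedom in the choice of $R$: take $R \equiv 0$, which is a valid bounded reward function on any transition graph. Under $R \equiv 0$ every policy has expected return $0$ and is therefore optimal, so by $\ref{item:shaping}$ every policy must also be optimal under $R + F = F$. For $\ref{item:all_policies_optimal} \Rightarrow \ref{item:ordering}$, note that if all policies in $M$ are optimal under $F$, then $G_F(\pi)$ takes the same value $c_M$ for every policy $\pi$. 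Linearity of expectation gives $G_{R+F}(\pi) = G_R(\pi) + G_F(\pi) = G_R(\pi) + c_M$, so adding $F$ shifts every expected return by the same constant and preserves the policy ordering.

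For $\ref{item:all_policies_optimal} \Leftrightarrow \ref{item:q_star}$, I would invoke the standard Bellman optimality criterion: a policy $\pi$ is optimal under $F$ in $M$ if and only if, at every state $s$ reachable under $\pi$, the support of $\pi(\cdot\mid s)$ is contained in $\arg\max_{a} Q_F^*(s,a)$. If every policy is optimal, then in particular for every reachable $s$ and every available pair of actions $a,a'$ there exists a policy that picks $a$ with positive probability at $s$ and another that picks $a'$; both being optimal forces $a, a' \in \arg\max Q_F^*(s,\cdot)$, so $Q_F^*(s,a)=Q_F^*(s,a')$. Conversely, if $Q_F^*(s,\cdot)$ is constant on the available actions at each reachable $s$, then every action is in the argmax and the Bellman criterion is satisfied by every policy.

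The only step that requires care rather than routine work is $\ref{item:all_policies_optimal} \Leftrightarrow \ref{item:q_star}$, where the quantifiers in $\ref{item:q_star}$ (``reachable'' states and ``available'' actions) need to be interpreted consistently with the MDP $M$ so that the Bellman argument applies: reachable should mean reachable under some policy from the support of $\mu$, and available should mean having positive probability under some policy, which in a standard MDP is just any action in $\actions$. Once this is pinned down, all four statements collapse to different facets of the same fact, that adding $F$ to any reward in any compatible MDP shifts returns by a policy-independent constant.
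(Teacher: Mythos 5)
Your proposal is correct and follows essentially the same route as the paper: the cycle \ref{item:ordering} $\Rightarrow$ \ref{item:shaping} $\Rightarrow$ \ref{item:all_policies_optimal} $\Rightarrow$ \ref{item:ordering} via the choice $R \equiv 0$ and the constant-offset argument, plus the separate equivalence \ref{item:all_policies_optimal} $\Leftrightarrow$ \ref{item:q_star} by constructing policies that deterministically pick $a$ or $a'$ at a reachable state. Your explicit appeal to the Bellman optimality criterion and the remark about pinning down ``reachable'' and ``available'' only make explicit what the paper leaves implicit.
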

\begin{proof}
    Clearly, \ref{item:all_policies_optimal} is a special case of \ref{item:shaping}
    with $R = 0$, since all policies are optimal under the zero reward function.
    That proves $\ref{item:shaping} \implies \ref{item:all_policies_optimal}$.

    Furthermore, \ref{item:all_policies_optimal} means that any policy has the same expected
    returns under $F$, call those $C$. Then the expected returns of policy $\pi$ under
    $R + F$ are $J_R(\pi) + C$. So shaping with $F$ added only a constant offset
    and thus doesn't affect the ordering of policies, so
    $\ref{item:all_policies_optimal} \implies \ref{item:ordering}$. And clearly,
    $\ref{item:ordering} \implies \ref{item:shaping}$, so we have
    $\ref{item:shaping} \iff \ref{item:all_policies_optimal} \iff \ref{item:ordering}$.

    \ref{item:q_star} means that any action in any reachable state is optimal,
    so clearly any policy is optimal, thus $\ref{item:q_star} \implies \ref{item:all_policies_optimal}$.

    Conversely, for $\ref{item:all_policies_optimal} \implies \ref{item:q_star}$,
    let $s$ be any reachable state and $a, a'$ actions available in $s$.
    Then consider policies $\pi_1, \pi_2$ where $\pi_1(a|s) = \pi_2(a'|s) = 1$
    and $\pi_1$ and $\pi_2$ both reach $s$ with non-zero probability. If both
    $\pi_1$ and $\pi_2$ are optimal, we must have $Q^*(s, a) = Q^*(s, a')$.
\end{proof}

\subsection{Divergence-free rewards minimize $L^2$-norm within their equivalence class}
The inner product on the space $\vectors$ of reward functions induces an $L^2$ norm given by
\begin{equation}
    \norm{R}_2 := \sqrt{\bracket{R, R}} = \sqrt{\sum_{t \in \transitions} w(t) R(t)^2}\,.
\end{equation}
Divergence-free reward functions can then be characterized as minimizing the $L^2$ norm
within their potential shaping equivalence class:
\begin{restatable}{proposition}{minimalRepresentative}\label{thm:minimal_representative}
    Let $R$ be a reward function. The associated divergence-free reward function $C(R)$
    is the unique reward function such that
    \begin{equation}
        \inf_{\potential \in \scalars} \norm{R + \grad \potential}_2 = \norm{C(R)}_2\,.
    \end{equation}
\end{restatable}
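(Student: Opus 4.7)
The plan is to derive the minimal-norm property directly from the orthogonal Hodge-style decomposition established in \cref{thm:hodge}. By that theorem, we have $\vectors = \ker(\div) \oplus \im(\grad)$ as an \emph{orthogonal} direct sum under the inner product defining $\norm{\cdot}_2$. So I would start by decomposing the given reward function as $R = C(R) + \grad \potential_R$ for some potential $\potential_R$, where $C(R) \in \ker(\div)$ is the divergence-free representative guaranteed by \cref{thm:divergence_free_representative}. Any other element of the potential shaping equivalence class of $R$ has the form $R + \grad \Psi = C(R) + \grad(\potential_R + \Psi)$ for some $\Psi \in \scalars$.

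Next, I would apply the orthogonality half of \cref{thm:hodge}: since $C(R) \in \ker(\div)$ and $\grad(\potential_R + \Psi) \in \im(\grad)$, the Pythagorean identity gives
\begin{equation}
    \norm{R + \grad \Psi}_2^2 = \norm{C(R)}_2^2 + \norm{\grad(\potential_R + \Psi)}_2^2 \geq \norm{C(R)}_2^2\,.
\end{equation}
Taking the infimum over $\Psi \in \scalars$, the right-hand side shows the infimum is at least $\norm{C(R)}_2$. The choice $\Psi = -\potential_R$ (or more generally any $\Psi$ with $\grad \Psi = -\grad \potential_R$) attains this bound, since then $R + \grad\Psi = C(R)$. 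Thus $\inf_{\Psi} \norm{R + \grad \Psi}_2 = \norm{C(R)}_2$.

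For uniqueness of the minimizer within the equivalence class, suppose $R + \grad\Psi$ achieves the infimum. Then the Pythagorean identity above forces $\norm{\grad(\potential_R + \Psi)}_2 = 0$, i.e.\ $\grad(\potential_R + \Psi) = 0$, so $R + \grad\Psi = C(R)$. Hence $C(R)$ is the only element of the equivalence class with this minimal norm, which is the statement of \cref{thm:minimal_representative}.

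The only step requiring any subtlety is invoking orthogonality correctly: once \cref{thm:hodge} is available, the argument is essentially the standard characterization of the orthogonal projection onto a closed subspace as the nearest point. I do not expect a real obstacle, since \cref{thm:hodge} already does the heavy lifting; the main thing to verify is that the result is stated in terms of the \emph{equivalence class} of $R$, so one must be careful that the minimization ranges over $\potential \in \scalars$ (which it does) rather than, say, over all of $\vectors$.
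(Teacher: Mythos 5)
Your proof is correct and follows essentially the same route as the paper: both invoke the orthogonality of $\ker(\div)$ and $\im(\grad)$ from \cref{thm:hodge} and apply the Pythagorean identity to $C(R) + \grad(\cdot)$, differing only in whether one re-parameterizes the infimum over the equivalence class of $C(R)$ or substitutes $R = C(R) + \grad\potential_R$ directly. Your version is, if anything, slightly more explicit about uniqueness of the minimizer, which the paper leaves implicit.
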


This $L^2$-minimization result also implies that the function $C$ mapping to divergence-free
representatives is Lipschitz continuous with Lipschitz constant 1:
\begin{restatable}{corollary}{lipschitz}\label{thm:lipschitz}
    Let $R$ and $R'$ be reward functions, then
    \begin{equation}
        \norm{C(R) - C(R')}_2 \leq \norm{R - R'}_2\,.
    \end{equation}
\end{restatable}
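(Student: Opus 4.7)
The plan is to exploit the orthogonal direct sum decomposition $\vectors = \ker(\div) \oplus \im(\grad)$ from \cref{thm:hodge} and observe that $C$ is the orthogonal projection onto $\ker(\div)$. Since orthogonal projections on Hilbert spaces are 1-Lipschitz, the claim should follow almost immediately.

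First, I would establish that $C$ is a linear operator. Given $R_1, R_2 \in \vectors$ with decompositions $R_i = C(R_i) + \grad \potential_i$ provided by \cref{thm:reward_decomposition}, the sum $R_1 + R_2 = (C(R_1) + C(R_2)) + \grad(\potential_1 + \potential_2)$ is another valid decomposition into a divergence-free part (since $\div$ is linear) and a gradient, and uniqueness of $C$ in \cref{thm:reward_decomposition} forces $C(R_1 + R_2) = C(R_1) + C(R_2)$. An analogous argument handles scalar multiplication, so in particular $C(R) - C(R') = C(R - R')$.

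Next, I would apply the decomposition theorem to $R - R'$ directly: write $R - R' = C(R - R') + \grad \potential$ for some potential $\potential$. By \cref{thm:hodge} the two summands are orthogonal under the inner product on $\vectors$, so the Pythagorean identity yields
\begin{equation}
    \norm{R - R'}_2^2 = \norm{C(R - R')}_2^2 + \norm{\grad \potential}_2^2 \geq \norm{C(R - R')}_2^2\,.
\end{equation}
Taking square roots and substituting $C(R - R') = C(R) - C(R')$ gives the desired inequality.

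There is not really a hard step here: the entire content of the corollary is packaged into \cref{thm:hodge} and the linearity of $C$. The only thing to be mildly careful about is making the linearity argument cleanly, since $C$ was defined via a uniqueness statement rather than an explicit linear formula; once that is established, the rest is a one-line application of orthogonality.
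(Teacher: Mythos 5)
Your proposal is correct and follows essentially the same route as the paper: the paper's proof is the one-liner $\norm{C(R) - C(R')}_2 = \norm{C(R-R')}_2 \leq \norm{R-R'}_2$, citing its minimal-$L^2$-norm characterization of $C(R)$ (whose own proof is exactly the Pythagorean/orthogonality argument you give) for the inequality. The only difference is that you inline that lemma and, helpfully, spell out the linearity of $C$ via uniqueness of the decomposition, which the paper leaves implicit.
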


\section{Proofs}\label{sec:proofs}
\subsection{Calculus on transition graphs}

\fundamentalTheorem*
\begin{proof}
    \ref{item:gradient_is_conservative} follows from a simple telescoping sum:
    \begin{equation}
        \begin{split}
            \int_\tau \grad \potential &= \sum_{t = 0}^{T - 1} \gamma^t (\gamma\potential(s_{t + 1}) - \potential(s_t)) \\
            &= \sum_{t = 0}^{T - 1} \gamma^{t + 1} \potential(s_{t + 1}) - \sum_{t = 0}^{T - 1} \gamma^t \potential(s_t) \\
            &= \sum_{t = 1}^T \gamma^t \potential(s_t) - \sum_{t = 0}^{T - 1} \gamma^t \potential(s_t) \\
            &= \gamma^T \potential(s_T) - \potential(s_0)\,.
        \end{split}
    \end{equation}
    If $\potential$ is bounded, then the first term vanishes as $T \to \infty$.

    For \ref{item:conservative_is_gradient}, we construct a potential $\potential$ as follows:
    for any state $s$, we pick an infinite trajectory $\tau^s$
    starting at $s$. Such a trajectory exists because each state in $\graph$ has at least one
    outgoing transition (as we assume throughout). Then we set
    \begin{equation}
        \potential(s) := -\int_{\tau^s} R\,.
    \end{equation}
    This is well-defined (i.e.\ independent of our choice of $\tau^s$) since $R$ is conservative.
    
    Now let $(s, a, s') \in \transitions$. Since any infinite trajectory starting at $s$ will
    lead to the same $\potential(s)$, we can assume without loss of generality that $\tau^s$
    is the concatenation of $(s, a, s')$ with $\tau^{s'}$. Then we have
    \begin{equation}
        \begin{split}
            (\grad \potential)(s, a, s') &= -\gamma \int_{\tau^{s'}} R + \int_{\tau^s} R \\
            &= -\gamma \int_{\tau^{s'}} R + \left(R(s, a, s') + \gamma \int_{\tau^{s'}} R\right) \\
            &= R(s, a, s')\,.
        \end{split}
    \end{equation}
    That shows that $R$ is indeed the gradient of $\potential$. As always, we also assume $R$ is bounded,
    i.e.\ $\abs{R(s, a, s')} \leq C$
    for some constant $C$ and all transitions $(s, a, s')$. Therefore, $\potential$ is bounded as well:
    \begin{equation}
        \begin{split}
            \abs{\potential(s)} &= \abs{\int_{\tau^s} R} \\
            &= \abs{\sum_{t = 0}^{\infty} \gamma^t R(s_t, a_t, s_{t + 1})} \\
            &\leq \sum_{t = 0}^{\infty} \gamma^t \abs{R(s_t, a_t, s_{t + 1})} \\
            &\leq \sum_{t = 0}^{\infty} \gamma^t C \\
            &= \frac{C}{1 - \gamma}\,.
        \end{split}
    \end{equation}
\end{proof}

\conservativeOptimality*
\begin{proof}
    It is clear that conservative reward functions are optimality-preserving:
    the returns depend only on the initial state by definition for conservative reward functions,
    so all policies have the same expected return, thus all policies are optimal
    under $F$. According to \cref{thm:optimality_preserving},
    that means $F$ is optimality-preserving.

    It remains to show the second claim, that if distinguishing actions exist, optimality-preserving
    (bounded) reward functions are conservative.
    Consider two infinite trajectories $\sigma$ and $\tau$ in $\graph$ with the same start point $s_0$.
    We need to show that $\int_\sigma F = \int_\tau F$.
    
    To do so, we choose transition dynamics $P$ such that for any transition $(s, a, s')$
    that is part of $\sigma$ or $\tau$, $P(s'|s, a) = 1$. For this to be possible,
    there must not be any states $s, s', s''$ and action $a$ such that $s' \neq s''$
    and $(s, a, s)$ and $(s, a, s'')$ are both part of $\sigma$ or $\tau$. The idea is
    that if two such \enquote{conflicting} transitions exist, we can set one of
    the actions to some other action $a' \neq a$ (a \enquote{distinguishing action}).
    This does not change the line integrals $\int_\sigma F$ and $\int_\tau F$ because
    $F$ is action-independent (which we prove in \cref{thm:action_independence}).
    Furthermore, we choose the initial state distribution to have support $\states$,
    to ensure that every state is reachable.

    We can now prove that under such transition dynamics, $V^*(s_0) = \int_\sigma F = \int_\tau F$.
    We write $\sigma = (s_0, a_0, s_1, \ldots)$.
    First, note that since $F$ is optimality-preserving,
    $V^*(s_0) = Q^*(s_0, a_0)$. The second step is to observe that because
    $P(s_1|s_0, a_0) = 1$, we also have $Q^*(s_0, a_0) = F(s_0, a_0, s_1) + \gamma V^*(s_1)$.
    We can continue applying these two steps to find
    \begin{equation}
        \begin{split}
        V^*(s_0) &= \sum_{t = 0}^{t - 1} \gamma^{t - 1}F(s_{t - 1}, a_{t - 1}, s_t) + \gamma^T V^*(s_t) \\
        &= \int_{\sigma_{0:t}} F + \gamma^t V^*(s_t)
        \end{split}
    \end{equation}
    for arbitrary $t$. Since the reward function $F$ is bounded by assumption and $\gamma < 1$, $V^*$ is bounded as well.
    Thus, because $\gamma < 1$, the RHS converges to $\int_\sigma F$ as $t \to \infty$. That proves
    $V^*(s_0) = \int_\sigma F$. We can show exactly analogously that $V^*(s_0) = \int_\tau F$, which concludes
    the proof.
\end{proof}

\begin{lemma}\label{thm:action_independence}
    Let $F$ be an optimality-preserving reward function on a transition graph
    $\graph = (\states, \actions, \transitions, \gamma)$.

    Then $F$ is action-independent, i.e.\ $F(s, a, s') = F(s, a', s')$ for any $s, s' \in \states$ and $a, a' \in \actions$.
\end{lemma}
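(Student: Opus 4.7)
The plan is to leverage the characterization of optimality-preserving reward functions already established in \cref{thm:optimality_preserving}, specifically item \ref{item:q_star}: for any reachable state $s$ and any two actions $a, a'$ available in $s$, $Q_F^*(s, a) = Q_F^*(s, a')$. The strategy is to cook up a compatible MDP in which both actions $a$ and $a'$ from $s$ deterministically lead to the \emph{same} successor $s'$, so that the two $Q^*$ values reduce to $F(s, a, s') + \gamma V_F^*(s')$ and $F(s, a', s') + \gamma V_F^*(s')$ respectively; equating them gives action independence.

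More concretely, fix states $s, s' \in \states$ and actions $a, a'$ such that both $(s, a, s')$ and $(s, a', s')$ lie in $\transitions$ (this is the only nontrivial case, since outside of $\transitions$ the function $F$ is not defined). First I would construct an MDP $M$ compatible with $\graph$ by specifying a transition kernel $P$ with $P(s' \mid s, a) = P(s' \mid s, a') = 1$, and for every other state-action pair choosing any distribution supported on the allowed successors according to $\transitions$ (such a choice exists because every state has at least one outgoing transition). I would take the initial state distribution $\mu$ to have full support on $\states$, which guarantees that $s$ is reachable. I would also pair this with some bounded base reward, though it does not matter which since \ref{item:all_policies_optimal} of \cref{thm:optimality_preserving} guarantees that all policies are optimal under $F$ itself.

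With this MDP in hand, the key step is to unfold the Bellman optimality equations. Because $P(s' \mid s, a) = 1$, we have
\begin{equation}
Q_F^*(s, a) = F(s, a, s') + \gamma V_F^*(s'),
\end{equation}
and symmetrically $Q_F^*(s, a') = F(s, a', s') + \gamma V_F^*(s')$. Since $s$ is reachable and both $a, a'$ are available actions at $s$ (because the corresponding transitions lie in $\transitions$ and thus have positive probability in our chosen $P$), \cref{thm:optimality_preserving}\ref{item:q_star} yields $Q_F^*(s, a) = Q_F^*(s, a')$, and subtracting gives $F(s, a, s') = F(s, a', s')$ as desired.

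The only real subtlety, and the step I would flag as the main obstacle, is verifying that the MDP constructed above is actually compatible with $\graph$. Compatibility requires $P(s'' \mid s, a) = 0$ for every $(s, a, s'') \notin \transitions$, which is fine at the two distinguished state-action pairs (since we concentrated all mass on the allowed successor $s'$), but must also be arranged at every other state-action pair. This is where the standing assumption that every state has at least one outgoing transition in $\transitions$ is essential, as it ensures we can extend $P$ to a full transition kernel whose support is contained in $\transitions$. Once that is in place, the argument is entirely routine.
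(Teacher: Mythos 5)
Your proof is correct and follows essentially the same route as the paper: both construct a compatible MDP with $P(s'|s,a)=P(s'|s,a')=1$ and $s$ reachable, then invoke the characterization in \cref{thm:optimality_preserving} (you use item \ref{item:q_star} plus the Bellman equation, the paper uses item \ref{item:all_policies_optimal} by contradiction, which amounts to the same argument).
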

\begin{proof}
    Assume that $F(s, a, s') > F(s, a', s')$ for some transitions $(s, a, s'), (s, a', s') \in \transitions$.
    Then in transition dynamics where $P(s'|s, a) = P(s'|s, a') = 1$ and $s$ is reachable,
    any policy $\pi$ that reaches $s$ with non-zero probability and has $\pi(a'|s) > 0$ cannot be optimal under $F$. But by \cref{thm:optimality_preserving}, this means that $F$ is not optimality-preserving (since then all policies would have to be optimal under $F$).
\end{proof}

\curlfree*
\begin{proof}
    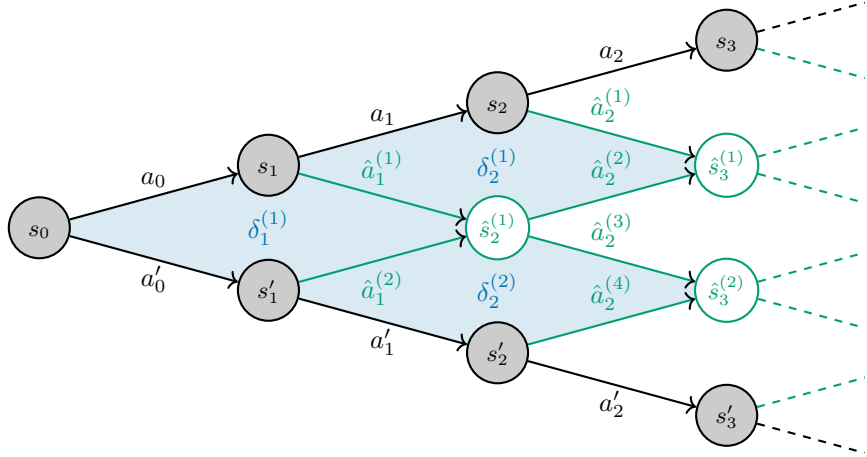
\begin{figure*}[!ht]\centering
        \begin{tikzpicture}
            \newcounter{vx}\newcounter{vy}
            \newcounter{ex}\newcounter{exn}\newcounter{ey}\newcounter{eyn}\newcounter{en}
            \def\Ssym{s}
            % Vertex
            \matrix (v) [matrix of nodes,nodes in empty cells,row sep=17pt,column sep=80pt] {&&&&\\ &&&&\\ &&&&\\ &&&&\\ &&&&\\ &&&&\\ &&&&\\ &&&&\\ &&&&\\};
            \node[blackvertex] (V15) at (v-5-1) {\Ssym_0};
            \coordinate (B) at ($(v-9-5)!0.4!(v-8-4)$);
            \coordinate (T) at ($(v-1-5)!0.4!(v-2-4)$);
            \clip ([xshift=-2pt]V15.west|-B) rectangle (T);
            \setcounter{vx}{1}\setcounter{vy}{5}
            \foreach \vertex in {1,2,3} {\stepcounter{vx}
                \setcounter{vy}{5}\addtocounter{vy}{-\vertex}\node[blackvertex] (V\thevx\thevy) at (v-\thevy-\thevx) {\Ssym_\vertex};
                \setcounter{vy}{5}\addtocounter{vy}{\vertex}\node[blackvertex]  (V\thevx\thevy) at (v-\thevy-\thevx) {\Ssym'_\vertex};
            }
            \node[colvertex] (V35) at (v-5-3) {\hat{\Ssym}^{(1)}_2};
            \node[colvertex] (V44) at (v-4-4) {\hat{\Ssym}^{(1)}_3};
            \node[colvertex] (V46) at (v-6-4) {\hat{\Ssym}^{(2)}_3};
            % Edges
            \setcounter{en}{-1}
            \foreach \e in {5,4,3} {\stepcounter{en}
                \setcounter{ex}{6}\addtocounter{ex}{-\e}\setcounter{exn}{\theex}\addtocounter{exn}{1}
                \setcounter{ey}{\e}\addtocounter{ey}{-1}
                \draw[edge] (V\theex\e)--(V\theexn\theey) node[as,above]{$a_\theen$};
                \setcounter{ey}{10}\addtocounter{ey}{-\e}\setcounter{eyn}{\theey}\addtocounter{eyn}{1}
                \draw[edge] (V\theex\theey)--(V\theexn\theeyn) node[as,below]{$a'_\theen$};
            }
            \draw[edgecol] (V24)--(V35) node[ascol,above]{$\hat{a}^{(1)}_1$};
            \draw[edgecol] (V26)--(V35) node[ascol,below]{$\hat{a}^{(2)}_1$};
            \draw[edgecol] (V33)--(V44) node[ascol,above]{$\hat{a}^{(1)}_2$};
            \draw[edgecol] (V35)--(V44) node[ascol,above]{$\hat{a}^{(2)}_2$};
            \draw[edgecol] (V35)--(V46) node[ascol,above]{$\hat{a}^{(3)}_2$};
            \draw[edgecol] (V37)--(V46) node[ascol,above]{$\hat{a}^{(4)}_2$};
            \setcounter{ex}{0}
            \foreach \e in {2,4,6,8} {
                \def\edgep{COLpath2}
                \stepcounter{ex}\ifnumequal{\theex}{1}{\def\edgep{black}}{}\ifnumequal{\theex}{8}{\def\edgep{black}}{}
                \setcounter{ey}{\e}
                \addtocounter{ey}{-1}\draw[edge,-,dashed,\edgep] (V4\e)   --(v-\theey-5);
                \def\edgep{COLpath2}
                \stepcounter{ex}\ifnumequal{\theex}{1}{\def\edgep{black}}{}\ifnumequal{\theex}{8}{\def\edgep{black}}{}
                \addtocounter{ey}{2}\draw[edge,-,dashed,\edgep] (V4\e)--(v-\theey-5);
            }
            % Area
                \begin{scope}[on background layer]
                    \fill[evarea] (V15.center)--(V24.center)--(V35.center)--(V26.center)--cycle;
                    \fill[evarea] (V24.center)--(V33.center)--(V44.center)--(V35.center)--cycle;
                    \fill[evarea] (V26.center)--(V35.center)--(V46.center)--(V37.center)--cycle;
                    \node[delta] at ($(V15)!0.5!(V35)$) {$\delta^{(1)}_1$};
                    \node[delta] at ($(V24)!0.5!(V44)$) {$\delta^{(1)}_2$};
                    \node[delta] at ($(V26)!0.5!(V46)$) {$\delta^{(2)}_2$};
                \end{scope}
        \end{tikzpicture}
        \caption{Proof that curl-free rewards are conservative: given two infinite
          paths, we can span the area between them with diamonds. Summing their discounted
          curls corresponds exactly to the difference in returns between the two paths,
          up to a boundary term that vanishes as we add more and more diamonds.
        }
    \end{figure*}
        
    It's clear that finitely conservative $\implies$ curl-free by the definition of the curl, what remains to show
    is the other direction. So we need to show that for two infinite paths $\sigma = (s_0, a_0, s_1, \ldots)$ and
    $\tau = (s_0, a_0', s_1', \ldots)$ with the same initial state $s_0$,
    $\int_\sigma R = \int_\tau R$.

    Using diamond-completeness, we now inductively pick states $\hat{s}_i^{(k)}$
    and actions $\hat{a}_i^{(k)}$ as follows:
    \begin{itemize}
        \item For convenience, we write $\hat{s}_i^{(0)} := s_i$, $\hat{a}_i^{(0)} := a_i$,
              $\hat{s}_i^{(i)} := s_i'$, and $\hat{a}_i^{(2i + 1)} := a_i'$.
        \item Assume we have picked states $\hat{s}_i^{(k)}$ for $k = 0, \ldots, i$ up to some $i \geq 1$,
              together with actions $\hat{a}_{i - 1}^{(2k - 1)}$ and $\hat{a}_{i - 1}^{(2k)}$ leading
              into $\hat{s}_i^{(k)}$. Then for $k = 1, \ldots i - 1$,
              we pick actions $\hat{a}_i^{(2k - 1)}$ and $\hat{a}_i^{(2k)}$
              and state $\hat{s}_{i + 1}^{(k)}$ by completing the transitions
              $\left(\hat{s}_{i - 1}^{(k - 1)}, \hat{a}_{i - 1}^{(2k - 2)}, \hat{s}_{i}^{(k - 1)}\right)$ and
              $\left(\hat{s}_{i - 1}^{(k - 1)}, \hat{a}_{i - 1}^{(2k - 1)}, \hat{s}_{i}^{(k)}\right)$ to a diamond.
              Specifically, we will have new transitions
              $\left(\hat{s}_{i}^{(k - 1)}, \hat{a}_i^{(2k - 1)}, \hat{s}_{i + 1}^{(k)}\right)$ and
              $\left(\hat{s}_{i}^{(k)}, \hat{a}_i^{(2k)}, \hat{s}_{i + 1}^{(k)}\right)$.
              We refer to this newly constructed diamond as $\delta_i^{(k)}$.
    \end{itemize}

    Our claim is now that
    \begin{equation}
        \int_\sigma R - \int_\tau R = \sum_{i = 1}^\infty \gamma^{i - 1} \sum_{k = 1}^i (\curl R)(\delta_i^{(k)})\,.
    \end{equation}
    Given that $R$ is curl-free, this would indeed conclude the proof.

    First, we write out the inner sum on the RHS more explicitly. To simplify notation, we will
    write $R_i^{(j)}$ for the reward of the transition with action $\hat{a}_i^{(j)}$. To be clear,
    many of these actions may in fact be the same, the reward does not \emph{actually} depend only
    on the action. But in our notation, the indices of the action uniquely determine the transition.
    \begin{equation}
        \begin{split}
            &\phantom{=}\;\sum_{k = 1}^i (\curl R)(\delta_i^{(k)}) \\
            &= \sum_{k = 1}^i \bigg(\phantom{\gamma}R\left(\hat{s}_{i - 1}^{(k - 1)}, \hat{a}_{i - 1}^{(2k - 2)}, \hat{s}_{i}^{(k - 1)}\right)\\[-10pt]
            &\phantom{\sum_{k = 1}^i \bigg(} + \gamma R\left(\hat{s}_{i}^{(k - 1)}, \hat{a}_i^{(2k - 1)}, \hat{s}_{i + 1}^{(k)}\right) \\[-10pt]
            &\phantom{\sum_{k = 1}^i \bigg(} - \phantom{\gamma}R\left(\hat{s}_{i - 1}^{(k - 1)}, \hat{a}_{i - 1}^{(2k - 1)}, \hat{s}_{i}^{(k)}\right) \\[-10pt]
            &\phantom{\sum_{k = 1}^i \bigg(} - \gamma R\left(\hat{s}_{i}^{(k)}, \hat{a}_i^{(2k)}, \hat{s}_{i + 1}^{(k)}\right)
            \bigg) \\
            &= \sum_{k = 1}^i \left(R_{i - 1}^{(2k - 2)} + \gamma R_i^{(2k - 1)} - R_{i - 1}^{(2k - 1)} - \gamma R_i^{(2k)}\right) \\
            &= \sum_{j = 0}^{2i - 1} (-1)^j R_{i - 1}^{(j)} + \gamma \sum_{j = 1}^{2i} (-1)^{j + 1} R_{i}^{(j)}\,.
        \end{split}
    \end{equation}
    We can now use a telescoping sum argument to get an expression for the partial sums:
    \begin{equation}
        \begin{split}
            &\phantom{=}\;\sum_{i = 1}^N \gamma^{i - 1} \sum_{k = 1}^i (\curl R)(\delta_i^{(k)}) \\
            &= \sum_{i = 1}^N \gamma^{i - 1}\left(\sum_{j = 0}^{2i - 1} (-1)^j R_{i - 1}^{(j)} + \gamma \sum_{j = 1}^{2i} (-1)^{j + 1} R_{i}^{(j)}\right) \\
            &= \sum_{i = 1}^N  \gamma^{i - 1} \sum_{j = 0}^{2i - 1} (-1)^j R_{i - 1}^{(j)} + \sum_{i = 1}^N \gamma^i \sum_{j = 1}^{2i} (-1)^{j + 1} R_{i}^{(j)} \\
            &= \sum_{i = 0}^{N - 1} \gamma^i \sum_{j = 0}^{2i + 1} (-1)^j R_i^{(j)} - \sum_{i = 1}^N \gamma^i \sum_{j = 1}^{2i} (-1)^j R_{i}^{(j)} \\
            &= \sum_{i = 0}^{N - 1} \gamma^i \left(R_i^{(0)} - R_i^{(2i + 1)}\right) - \gamma^N \sum_{j = 1}^{2N} (-1)^j R_N^{(j)}\,.
        \end{split}
    \end{equation}
    The first sum in the final line is obtained from the $j = 0$ and $j = 2i + 1$ cases of the first term
    in the penultimate line. Once these are taken care of, the two inner sums are identical, and we get a
    telescoping sum, such that only the $i = 0$ and $i = N$ terms remain. The former is an empty sum however,
    so it does not appear here.

    The first sum in this result is precisely the difference in line integrals along $\sigma$ and $\tau$
    up to time step $N$. The second term is discounted by $\gamma^N$, so as long as the reward function
    is bounded, it converges to zero (since the sum grows only linearly with $N$). Thus, the partial sums
    converge to $\int_\sigma R - \int_\tau R$, which concludes the proof.
\end{proof}

\subsection{Gauge-fixing with divergence-free rewards}
\explicitDivergence*
\begin{proof}
    Consider the indicator function
    \begin{equation}
        \delta_s(s') := \begin{cases}
            1 & \text{if } s' = s, \\
            0 & \text{otherwise}
        \end{cases}\,.
    \end{equation}
    Then we can write the divergence as
    \begin{align*}
            &\phantom{=}\;(\div R)(s) \\
            &= \frac{1}{\stateweights(s)}\bracket{\div R, \delta_s} \\
            &\overset{(\ref{eq:divergence_adjoint})}{=} -\frac{1}{\stateweights(s)}\bracket{R, \grad \delta_s} \\
            &\overset{(\ref{eq:inner_product_edges}), (\ref{eq:grad_transition})}{=} -\frac{1}{\stateweights(s)}\smashoperator[r]{\sum_{(s', a, s'') \in \transitions}} \weights(s', a, s'') R(s', a, s'')(\gamma \delta_s(s'') - \delta_s(s')) \\
            &= -\frac{1}{\stateweights(s)}\left(\gamma\smashoperator{\sum_{t \in \transitions_{\text{in}}(s)}} \weights(t) R(t) - \smashoperator{\sum_{t \in \transitions_{\text{out}}(s)}} \weights(t) R(t)\right) \\
            &= \frac{1}{\stateweights(s)}\left(\smashoperator[r]{\sum_{t \in \transitions_{\text{out}}(s)}} \weights(t) R(t)
            - \gamma\sum_{\mathclap{t \in \transitions_{\text{in}}(s)}} \weights(t) R(t)\right)
    \end{align*}
    as claimed.
\end{proof}

\laplacianBijective*
\begin{proof}
    It suffices to show that $\Laplace$ is injective, since it is a linear
    endomorphism on $L^2(\states)$, which is finite-dimensional.
    In other words, we want to show $\ker(\Laplace) = \{0\}$.
    Since $\Laplace = \div \circ \grad$, we have
    \begin{equation}
        \ker(\Laplace) = \grad^{-1}(\ker(\div)) = \grad^{-1}(\ker(\div) \cap \im(\grad))\,.
    \end{equation}
    \Cref{thm:hodge} implies that $\ker(\div) \cap \im(\grad) = \{0\}$, and thus
    $\ker(\Laplace) = \grad^{-1}(\{0\}) = \ker(\grad)$.

    Now let $\potential \in \ker(\grad)$, i.e.
    \begin{equation}
        0 = \grad(\potential)(s, a, s') = \gamma \potential(s') - \potential(s)
    \end{equation}
    for all $(s, a, s') \in \transitions$.
    Thus, if $(s, a, s') \in \transitions$, we have $\potential(s) = \gamma\potential(s')$.

    We assumed that for any $s \in \states$, there is a directed cycle
    \begin{equation}
        s \to s_1 \to \ldots \to s_{k - 1} \to s_k = s\,.
    \end{equation}
    We thus have
    \begin{equation}
        \potential(s) = \gamma\potential(s_1) = \ldots = \gamma^k\potential(s_k) = \gamma^k\potential(s)\,.
    \end{equation}
    Since $\gamma < 1$ and $k \neq 0$, this implies $\potential(s) = 0$. $s$ was arbitrary, so $\potential = 0$,
    which proves $\ker(\grad) = \{0\}$ as desired.
\end{proof}

\hodge*
\begin{proof}
    This follows immediately from the fact that the divergence is defined as the
    negative adjoint of the gradient: a reward function $R$ is orthogonal to $\im(\grad)$
    if and only if
    \begin{equation}
        0 = \bracket{R, \grad \potential} = \bracket{-\div R, \potential}
    \end{equation}
    for all $\potential \in \scalars$. This is the case iff $\div R = 0$ (by non-degeneracy
    of the inner product on $\scalars$). In summary, $R$ is orthogonal to $\im(\grad)$
    iff $\div R = 0$, i.e.\ the orthogonal complement to $\im(\grad)$ is the space
    $\ker(\div)$ of divergence-free reward functions. In finite-dimensional Hilbert spaces,
    any subspace and its orthogonal complement form an orthogonal direct sum of the entire space.
\end{proof}

\rewardDecomposition*
\begin{proof}
    According to \cref{thm:hodge}, there must be unique reward functions $R'$ and $Q$ such that
    $R = R' + Q$, $\div R' = 0$, and $Q \in \im(\grad)$. That proves the first half of the statement.

    Under the additional assumptions, $\Delta = \div \circ \grad$ is bijective according to
    \cref{thm:laplacian_bijective}, so the gradient operator must be injective.
    This means that there is a \emph{unique}
    $\potential$ such that $Q = \grad \potential$.

    For the explicit representation of $\potential$, apply the divergence operator
    to \cref{eq:reward_decomposition} to get
    \begin{equation}
        \div R = \div R' + \Laplace \potential = \Laplace \potential\,.
    \end{equation}
\end{proof}

\divergenceFreeRepresentative*
\begin{proof}
    This is simply a restatement of the first part of \cref{thm:reward_decomposition}. Existence
    follows immediately.
    For uniqueness, assume we have two reward functions
    $R$ and $R'$, both divergence-free, such that $R = R' + \grad \potential$ for some
    potential $\potential$. We can read this as a decomposition of $R$ into a divergence-free
    reward and a gradient.
    But since $R$ is itself divergence-free, another valid decomposition is $R = R + \grad 0$.
    Since the divergence-free reward in this decomposition must be unique, $R = R'$.
\end{proof}

\minimalRepresentative*
\begin{proof}
    Note that since $C(R)$ and $R$ are in the same potential shaping equivalence class,
    \begin{equation}
        \inf_{\potential \in \scalars} \norm{R + \grad \potential}_2 = \inf_{\potential \in \scalars} \norm{C(R) + \grad \potential}_2\,.
    \end{equation}
    Since $C(R) \in \ker(\div)$,
    $C(R)$ is orthogonal to $\grad \potential$ according to \cref{thm:hodge}. Thus,
    \begin{equation}
        \norm{C(R) + \grad \potential}_2^2 = \norm{C(R)}_2^2 + \norm{\grad \potential}_2^2\,.
    \end{equation}
    Clearly, this is minimized for $\grad\potential = 0$, so $C(R)$ is the representative
    of its equivalence class with minimal $L^2$ norm.
\end{proof}

\lipschitz*
\begin{proof}
    \begin{equation}
        \norm{C(R) - C(R')}_2 = \norm{C(R - R')}_2 \leq \norm{R - R'}_2\,,
    \end{equation}
    where the last step follows from \cref{thm:minimal_representative}.
\end{proof}

\section{Reward functions and vector fields}\label{sec:vectors}
We have seen that reward functions can be thought of as roughly analogous to vector fields.
For example, the domain of the divergence and curl, and the codomain of the gradient, are
all the space of reward functions. We expect that this superficial analogy is the most helpful
perspective for most readers. But for those who are interested, we now discuss some subtleties
and an arguably more accurate way to think about reward functions.

The analogy to vector fields used in the main paper is in fact a simplification:
it is better to think of reward functions as \emph{covector} fields, or even better
as \emph{1-cochains} from algebraic topology. Very roughly speaking, (simplicial)
1-chains are obtained by taking formal linear combinations of 1-simplices (i.e.\
line segments) over some ring---we will use the real numbers $\R$.
A (simplicial) 1-cochain is then a linear map from this space of 1-chains to the real numbers.
Because 1-simplices form a basis for the space of 1-chains and because 1-cochains are linear,
we can also think of a 1-cochain as an assignment of some real number to each 1-simplex.
In the context of a graph, where the 1-simplices are the edges, a 1-cochain is
thus simply a map on the edges. That is precisely how we model reward functions.

The analogy to (co)vector field comes from the fact that covector fields on manifolds,
i.e.\ differential 1-forms, are an example of 1-cochains, just not on the \emph{simplicial} chain
complex. That we can simply talk about vector fields, rather than covector fields,
is simply because the two are isomorphic in Euclidean space.

One important caveat to the entire analogy is that cochains or differential forms are
\emph{antisymmetric}, in the sense that reversing the orientation of a chain flips the
sign of its evaluation under a cochain. This is an effect of the linearity of cochains
and the fact that the orientation of a chain is represented by the signs of its coefficients.
Reward functions, on the other hand, are \emph{not} antisymmetric: in general,
$R(s, a, s') \neq R(s', a, s)$. This, along with discounting, is the reason why we needed
to define the curl on diamonds, rather than triangles (which would be the usual approach
on graphs or simplices), in order to get the same types of results as in vector calculus.

\end{document}